\documentclass[10pt,a4paper,logo]{paper}

\usepackage[all]{hypcap}

\usepackage[authoryear, round]{natbib}

\usepackage[section]{placeins}
\usepackage{nicefrac}       
\usepackage{multirow}
\usepackage{cleveref}
\IfFileExists{dsfont.sty}{\usepackage{dsfont}}{}
\providecommand{\mathds}[1]{\mathbb{#1}}
\usepackage{makecell}
\usepackage{subcaption}
\IfFileExists{bbm.sty}{\usepackage{bbm}}{}

\usepackage{wrapfig}
\usepackage{mathtools,xparse}
\usepackage{array}
\usepackage{arydshln}
\usepackage{scalerel}
\usepackage{tablefootnote}
\usepackage{tocloft}
\usepackage{adjustbox}
\usepackage{setspace}
\usepackage[most,skins,theorems]{tcolorbox}

\newtcolorbox{promptbox}{
  colback=gray!5,
  colframe=gray!70,
  boxrule=0.8pt,
  arc=2mm,
  left=10pt,
  right=10pt,
  top=10pt,
  bottom=10pt,
  width=\linewidth,
  fontupper=\ttfamily\footnotesize,
  enhanced,
}

\newboolean{showsection}
\setboolean{showsection}{true}
\makeatletter
\@namedef{ver@everyshi.sty}{}
\makeatother

\definecolor{custom_green}{rgb}{0.0, 0.5, 0.0}
\definecolor{custom_red}{rgb}{1.0, 0.01, 0.24}
\definecolor{custom_blue}{HTML}{C9DAF7}
\definecolor{custom_purple}{HTML}{D9D1E9}
\definecolor{title_blue}{HTML}{204899}
\definecolor{cite_blue}{HTML}{044dc1}
\definecolor{cite_purple}{HTML}{7406a7}

\hypersetup{
    colorlinks = true,
    citecolor = {MidnightBlue},
    linkcolor = {BrickRed},
    urlcolor = {RoyalBlue},
}

\definecolor{blanchedalmond}{rgb}{1.0, 0.92, 0.8}
\definecolor{carmine}{rgb}{0.59, 0.0, 0.09}
\definecolor{lightblue}{rgb}{0.22,0.45,0.70}%

\newtheorem{theorem}{Theorem}[section]

\newtheorem{lemma}[theorem]{Lemma}

\newtheorem{proposition}[theorem]{Proposition}
\newtheorem{definition}[theorem]{Definition}

\renewcommand{\mathbf}{\boldsymbol}

\makeatletter
\def\Ddots{\mathinner{\mkern1mu\raise\p@
\vbox{\kern7\p@\hbox{.}}\mkern2mu
\raise4\p@\hbox{.}\mkern2mu\raise7\p@\hbox{.}\mkern1mu}}
\makeatother

\numberwithin{equation}{section}

\definecolor{amaranth}{rgb}{0.9, 0.17, 0.31}
\definecolor{antiquebrass}{rgb}{0.8, 0.58, 0.46}
\definecolor{antiquefuchsia}{rgb}{0.57, 0.36, 0.51}
\definecolor{chromeyellow}{rgb}{0.31, 0.47, 0.26}

\newcommand{\1}{\mathds 1}

\usepackage{amsmath,amsfonts,bm}

\def\eqref#1{equation~\ref{#1}}

\def\1{\bm{1}}

\DeclareMathAlphabet{\mathsfit}{\encodingdefault}{\sfdefault}{m}{sl}
\SetMathAlphabet{\mathsfit}{bold}{\encodingdefault}{\sfdefault}{bx}{n}

\makeatletter
\def\mathcolor#1#{\@mathcolor{#1}}
\def\@mathcolor#1#2#3{%
  \protect\leavevmode
  \begingroup
    \color#1{#2}#3%
  \endgroup
}
\makeatother

\Crefformat{equation}{#2Eq.\;(#1)#3}
\Crefformat{figure}{#2Figure #1#3}
\Crefformat{assumption}{#2Assumption #1#3}
\Crefname{assumption}{Assumption}{Assumptions}

\usepackage{crossreftools}
\makeatletter
\renewcommand\footnoterule{%
  \kern -3\p@
  \hrule \@width 2in \kern 2.6\p@
}
\makeatother

\titlespacing*{\paragraph}
  {0pt}
  {0.5\baselineskip}
  {0.5em}

\title{KV4-RL: Unlocking Faster Asynchronous Rollouts for RLVR with INT4 KV Cache Quantization}

\reportnumber{}

\author[1]{Sihwan Park}
\author[2]{Sung-Yub Kim}
\author[1]{Doohyuk Jang}
\author[1,3,\footnotesize{$\dagger$}]{Eunho Yang}

\affil[1]{KAIST}
\affil[2]{NAVER Cloud}
\affil[3]{AITRICS}

\correspondingauthor{Correspondence to: \email{\{sihwan.park, eunhoy\}@kaist.ac.kr}.}

\begin{abstract}
\paragraph{Abstract.} Asynchronous rollout improves the efficiency of reinforcement learning with verifiable rewards (RLVR) by reducing under-utilization during long-tail generation. Sustained concurrency also concentrates context processing into fewer decode steps, increasing the KV workload per step. We characterize this \emph{KV concentration} and find that KV cache quantization delivers substantially greater acceleration than weight-only and weight-activation quantization in our matched-workload comparisons. However, INT4 KV rollout can degrade learning even with token-level truncated importance sampling (TIS). We propose \emph{KV4-RL}, which combines calibration-free INT4 KV quantization with selective INT8 protection of attention sinks and recent context. Protecting a bounded number of positions reduces sampler mismatch while retaining most of the speedup from INT4. Across 400 iterations of Qwen3-8B-Base GRPO training, KV4-RL achieves $1.80\times$ cumulative rollout speedup and $1.27\times$ cumulative full-iteration speedup over BF16 rollout, reaching $1.97\times$ rollout speedup over the final 50 iterations. Under a common BF16 evaluator, its final six-benchmark mean accuracy is within 0.5 percentage points of BF16-rollout training for both Qwen3-4B-Base and Qwen3-8B-Base.
\end{abstract}

\begin{document}

\maketitle

\section{Introduction}
\label{sec:introduction}

Reinforcement learning with verifiable rewards (RLVR) has become a central approach for training reasoning models. Methods such as Group Relative Policy Optimization (GRPO) learn from multiple responses per prompt~\citep{grpo,gspo,dapo}, making rollout generation a major training cost. Reasoning responses can span thousands of tokens and vary substantially in length~\citep{areal,april}. In conventional wave-based collection, shorter responses finish early while a few long responses continue decoding, causing the active batch to collapse. Asynchronous systems mitigate this long-tail inefficiency by admitting new requests~\citep{areal,rollflash}, while partial-rollout schemes carry unfinished responses across policy updates~\citep{kimi,april}. We study quota-based partial collection, which combines these mechanisms to sustain decode concurrency.

Sustaining concurrency improves utilization, but also changes the workload processed at each decode step. For a fixed set of responses, asynchronous scheduling preserves the total generated tokens and logical attention work, but advances more responses together and concentrates the same KV processing into fewer sequential steps. We call this increase in the average KV workload processed per decode step \emph{KV concentration}. Response-length variability strengthens this effect because longer responses remain active for more steps and are therefore overrepresented in the active batch. In a matched Qwen3-4B-Base trace, asynchronous collection increases mean nominal resident KV volume by $7.82\times$ and mean decode-step latency by $4.52\times$. The larger active batch still yields $1.83\times$ higher decode-service throughput, but its per-step cost offsets a substantial part of the concurrency gain. At longer contexts, KV capacity can also restrict request admission. Fully realizing the benefit of asynchronous rollout therefore requires processing its concentrated workload efficiently.

This workload shift changes which quantization target provides the greatest rollout acceleration. Prior work has primarily quantized model weights or activations~\citep{flashrl,qerl,qurl}, with recent work also incorporating FP8 KV caches into broader low-precision stacks~\citep{fp8rl}. Under sustained concurrency, weight reads are increasingly amortized across the active batch, whereas attention continues to process the context of every response. Our roofline analysis identifies KV reads as the largest idealized service demand in the studied workloads, and matched-workload measurements show substantially greater decode acceleration from KV quantization than from the evaluated weight-only and weight-activation configurations. Cache compression can additionally sustain higher concurrency by fitting more resident context within the same HBM budget. Together, these results identify KV cache quantization as an effective target for asynchronous rollout.

Aggressive KV quantization nevertheless creates a distinct challenge for RL training since rollouts generate the samples used for subsequent policy updates. Numerical perturbations can alter the prefixes visited by the sampler, group-relative advantages, and ultimately the learned policy. Token-level truncated importance sampling (TIS) corrects probability ratios at sampled prefixes but does not remove all of these effects. Under the same asynchronous GRPO and TIS recipe, 8-bit KV rollout remains close to BF16-rollout training, whereas 4-bit KV rollout misses the later improvement of BF16 and finishes 5.24 percentage points lower on MATH-500 avg@8. Uniformly improving 4-bit fidelity through finer groups or more expensive reconstruction would add overhead throughout the context. The central design problem is therefore to recover sufficient rollout sampler fidelity without giving up the context-scaling benefit of 4-bit.

We address this trade-off with \textbf{KV4-RL}, a calibration-free INT4 KV cache for asynchronous RLVR rollout. KV4-RL stores the full context in paged INT4 format and protects a bounded set of attention-sink and recent positions using an INT8 side cache and higher-fidelity probability-value arithmetic. Its additional storage and attention work remain bounded independently of context length. This selective-precision design sharply reduces shared-prefix conditional KL while retaining most of the INT4 decode acceleration, and recovers the late-stage learning improvement missing from INT4 KV rollout. End-to-end experiments on Qwen3-4B-Base and Qwen3-8B-Base show that KV4-RL preserves final-policy quality close to BF16-rollout training while substantially accelerating rollout and full training iterations, with larger rollout gains as responses become longer. Our contributions are as follows:
\begin{itemize}[leftmargin=7mm]
\item We identify and characterize \emph{KV concentration} as a structural consequence of asynchronous rollout. By concentrating context processing into fewer decode steps, asynchronous scheduling shifts the rollout workload toward KV processing. Through matched-workload comparisons, we show that KV cache quantization provides substantially greater decode acceleration than the tested weight-only and weight-activation alternatives.

\vspace{5pt}
\item To our knowledge, we present the first study of sub-8-bit KV cache quantization for RLVR rollout. We show that INT4 KV rollout can degrade learning despite token-level TIS, and propose \emph{KV4-RL}, which uses bounded INT8 protection to reduce sampler mismatch while retaining most of the INT4 execution benefit.

\vspace{5pt}
\item We validate KV4-RL through full-parameter GRPO training on Qwen3-4B-Base and Qwen3-8B-Base. Under a common BF16 evaluator, its final six-task mean accuracy remains within 0.5 percentage points of BF16-rollout training at both model sizes. On Qwen3-8B-Base, KV4-RL achieves $1.80\times$ cumulative rollout speedup and $1.27\times$ cumulative full-iteration speedup, with rollout speedup reaching $1.97\times$ over the final 50 iterations.
\end{itemize}
\section{Related Work}
\label{sec:related-work}

\paragraph{Asynchronous and partial rollout.}
RL training systems mitigate rollout bottlenecks through flexible placement and scheduling of generation and policy optimization~\citep{verl,areal,rollflash}. One line of work overlaps rollout collection with learner execution, increasing hardware utilization while controlling the policy staleness introduced by asynchronous updates~\citep{areal,staleflow}. Another line addresses long-tail generation through partial rollout, allowing completed responses to enter training while unfinished responses continue across subsequent updates~\citep{kimi,april}. These approaches primarily study how scheduling reduces idle time or exposes more parallelism. We examine a complementary system consequence. By keeping more responses active, asynchronous collection concentrates context processing into fewer and larger decode steps, changing both the execution bottleneck and the relative value of different quantization targets. Our experiments focus on colocated, quota-based partial collection, where generation and optimization alternate on the same GPUs and unfinished prefixes are carried across policy updates.

\paragraph{Quantized reinforcement learning.}
Quantized RL methods reduce rollout cost by lowering the precision of model execution. Existing approaches predominantly target weights or activations, using weight-only or weight--activation quantization with either low-rank adaptation or full-parameter policy updates~\citep{qerl,flashrl,qurl}. Since the resulting sampler may differ numerically from the learner policy, these methods combine quantized rollout with importance correction, modified policy objectives, or other mechanisms that improve robustness to rollout--training mismatch~\citep{tis,flashrl,qurl}. A related direction instead aligns the numerical execution paths used during generation and training, or introduces compensation on both sides of the update~\citep{jetrl,quads}. FP8-RL extends low-precision execution to a broader training stack that also includes FP8 KV caches~\citep{fp8rl}. These studies establish that low-precision rollout can preserve learning when sampler mismatch remains sufficiently controlled. Our work investigates a different numerical target and operating regime: sub-8-bit KV cache quantization under the sustained concurrency and long contexts produced by asynchronous rollout. We examine both the execution efficiency of rollout KV quantization and its effect on learned-policy quality, while keeping learner computation and evaluation in BF16 to isolate the rollout precision.

\paragraph{KV cache compression and selective precision.}
KV cache compression reduces the storage and memory traffic associated with attention during autoregressive generation. Distribution-aware quantizers exploit the distinct statistical properties of keys and values~\citep{kivi,kvquant}, while rotation-based methods mix channels to reduce sensitivity to outliers before quantization~\citep{quarot,saw}. To translate storage reduction into practical speedup, efficient attention implementations combine packed KV representations with low-bit dot products and fused reconstruction~\citep{qserve,bitdecoding,saw}. Selective-precision methods complement these techniques by retaining influential initial positions~\citep{streamingllm,kvsink}, recent context~\citep{kivi,skvq}, or dynamically selected regions at higher precision~\citep{pmkvq}. Most of these methods are designed and evaluated for inference, where numerical perturbations affect the generated output of an individual request. RL rollout introduces an additional constraint because those outputs become training data and can influence subsequent policy updates. KV4-RL builds on these compression and selective-precision principles, but utilizes them to balance execution efficiency against sampler fidelity during repeated RLVR training.

\begin{figure}[t]
  \centering
  \includegraphics[width=0.9\linewidth]{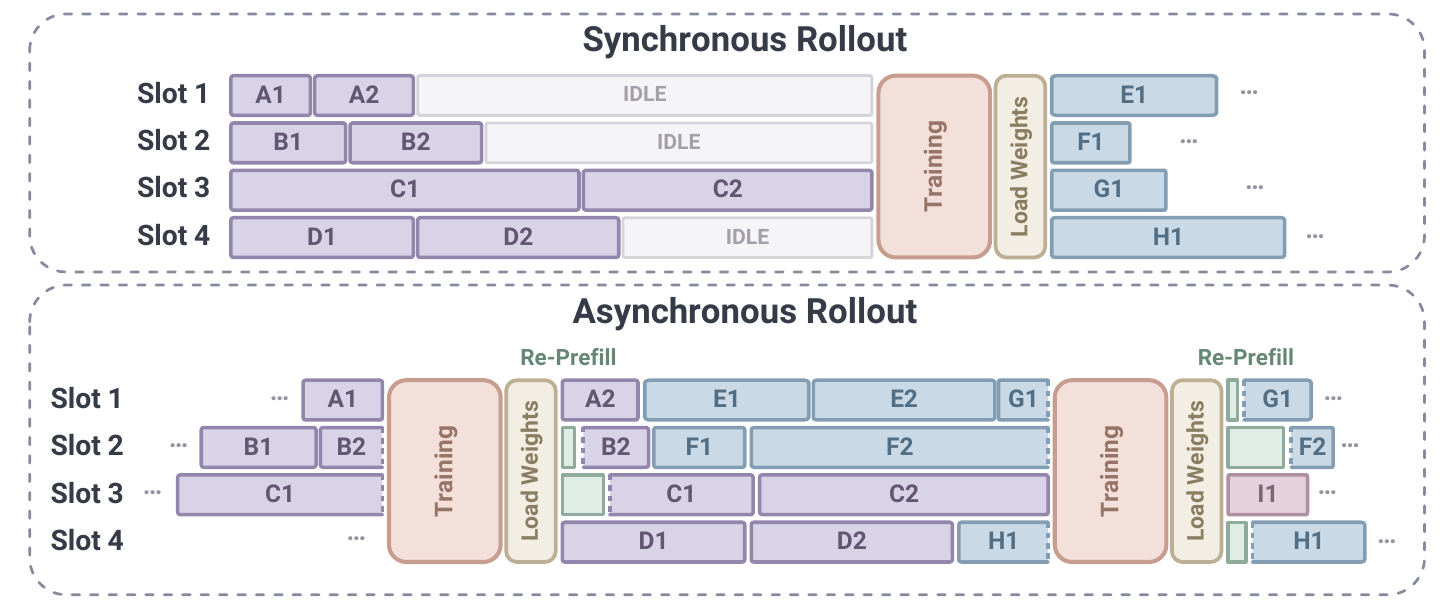}
  \vspace{-5pt}
  \caption{Illustration of synchronous and quota-based partial rollout with $T=4$ prompt groups, $G=2$ responses per prompt, and four logical decode slots. Each row represents one position in the active decode batch. A--D denote distinct prompts, and A1/A2 denote the two responses from prompt A. Colors distinguish the update cohorts shown in the schematic but do not indicate assignments fixed in advance. Training, weight loading, and re-prefill boundaries are marked explicitly.}
  \label{fig:app-async-rollout}
  \vspace{-10pt}
\end{figure}

\section{Preliminaries}
\label{sec:preliminaries}

\paragraph{Quota-based partial rollout.}
Each training iteration requires $T$ completed prompt groups, with $G$ responses per group. As shown in Fig.~\ref{fig:app-async-rollout}, the synchronous baseline launches the groups assigned to the current wave and waits until all $TG$ responses have completed. Its active batch therefore contracts as shorter responses finish, leaving decode capacity idle until the longest response terminates and the policy update begins. Our asynchronous collector instead maintains an oversubscribed request pool and admits new responses whenever decode capacity becomes available. Responses can begin in one collection interval and continue across a policy update, while work that may contribute to a later update can start before the current quota is reached. The scheduler does not assign a fixed set of prompt groups to each update in advance. Collection can stop once $T$ prompt groups have each completed all $G$ responses, provided that no unfinished group has reached the configured age limit. The oldest $T$ completed groups are selected for training, additional completed groups remain in the pool, and unfinished groups retain their token prefixes for a later interval.

After a policy update, the KV caches of carried prefixes cannot be reused since they were computed under the staled policy. The retained token sequences are therefore re-prefilled under the updated policy to reconstruct their KV caches before decoding resumes. Re-prefill does not resample or modify the retained tokens, and the probability originally recorded for each sampled token remains unchanged. A carried response can consequently contain tokens sampled under multiple policy versions, while its KV cache is reconstructed under the current rollout policy whenever generation resumes. In our colocated configuration, rollout generation and policy optimization alternate on the same GPUs. Thus, \emph{asynchronous rollout} refers to responses progressing across collection boundaries and policy updates, rather than concurrent rollout and learner execution.

\paragraph{RLVR and GRPO.}
RLVR optimizes an autoregressive policy $\pi_\theta$ using rewards obtained by verifying generated responses. Given a prompt, GRPO samples $G$ responses and assigns response $y_i$ the group-normalized advantage~\citep{grpo,dapo,gspo}
\begin{equation}
    \hat A_i
    =
    \frac{R_i-\bar R}{s_R+\delta},
    \label{eq:prelim-advantage}
\end{equation}
where $R_i$ is the response reward, $\bar R$ and $s_R$ are the group reward mean and standard deviation, and $\delta>0$ stabilizes normalization. The response-level advantage is shared across all tokens of $y_i$. Let $h_{i,j}$ denote the prompt and response prefix preceding token $y_{i,j}$. Given the frozen pre-update policy $\pi_{\mathrm{old}}$, the token-level likelihood ratio and PPO-style clipped surrogate~\citep{ppo} are
\begin{equation}
    r_{i,j}(\theta)
    =
    \frac{\pi_\theta(y_{i,j}\mid h_{i,j})}
         {\pi_{\mathrm{old}}(y_{i,j}\mid h_{i,j})},\quad
    \mathcal S_{i,j}(\theta)
    =
    \min\!\left\{
        r_{i,j}(\theta)\hat A_i,\,
        \operatorname{clip}
        \!\left(r_{i,j}(\theta),1-\epsilon,1+\epsilon\right)
        \hat A_i
    \right\}.
\label{eq:prelim-grpo}
\end{equation}
Because responses may span multiple updates, $\pi_{\mathrm{old}}$ need not be the policy that originally generated every collected token. Our implementation uses DAPO-style token-level averaging~\citep{dapo}, KL regularization toward a fixed initial policy $\pi_{\mathrm{ref}}$, and dual clipping for negative advantages. Table~\ref{tab:app-training} gives the complete optimization configuration.

\paragraph{Quantization and rotations.}
A $b$-bit affine quantizer represents a value $x$ using an integer code $c$ as
\begin{equation}
    \hat x
    =
    \Delta c+m,
    \qquad
    c\in\{0,\ldots,2^b-1\},
    \label{eq:prelim-affine-quantization}
\end{equation}
where $\Delta>0$ is the scale and $m$ is an offset. Group-wise quantization shares these parameters across a block of values, making its resolution sensitive to outliers within the group. Hadamard rotations mitigate this sensitivity by mixing coordinates before quantization~\citep{quarot}. For a normalized Hadamard matrix $H$ satisfying $HH^\top=I$, applying the same rotation to queries and keys preserves their unquantized attention scores:
\begin{equation}
    (QH)(KH)^\top
    =
    QHH^\top K^\top
    =
    QK^\top.
    \label{eq:prelim-hadamard}
\end{equation}
Keys can therefore be stored in the rotated basis if queries are transformed into the same basis before attention. We use W$w$A$a$ to denote $w$-bit weights and $a$-bit activations, and KV$b$ to denote a $b$-bit KV cache. For example, W4A16 uses 4-bit weights with 16-bit activations, while KV4 uses a 4-bit KV cache. Our KV-only configurations retain BF16 model weights and linear-layer activations, and learner computation remains in BF16.

\paragraph{Sampler mismatch and token-level correction.}
Let $q_{i,j}$ be the probability recorded by the rollout sampler when it generated token $y_{i,j}$. It can differ from $\pi_{\mathrm{old}}(y_{i,j}\mid h_{i,j})$ since the token was generated under an earlier policy version, a different numerical execution path, or both. Token-level truncated importance sampling (TIS) applies the correction~\citep{tis,qurl}
\begin{equation}
    c_{i,j}=\min\!\left\{C,\,\frac{\pi_{\mathrm{old}}(y_{i,j}\mid h_{i,j})}{q_{i,j}}\right\},\quad\widehat{\mathcal J}_{\mathrm{TIS}}(\theta)=\frac{1}{N_{\mathrm{tok}}}\sum_{i,j}c_{i,j}\,\mathcal S_{i,j}(\theta),
\label{eq:prelim-tis}
\end{equation}
where $C\geq1$ is the truncation threshold and $N_{\mathrm{tok}}$ is the number of valid response tokens. The ratio $r_{i,j}(\theta)$ in Eq.~\ref{eq:prelim-grpo} describes the learner update relative to $\pi_{\mathrm{old}}$, whereas $c_{i,j}$ corrects the difference between $\pi_{\mathrm{old}}$ and the sampler that generated the token. For carried responses, $q_{i,j}$ remains the probability recorded under the policy version and numerical path used at the original generation step. Training samples at temperature one without top-$k$ or nucleus truncation, so the recorded softmax probabilities match the sampling distribution. TIS corrects token probabilities at the sampled prefixes, subject to truncation, but does not alter which prefixes the sampler visits or the responses used to compute the group-normalized advantage in Eq.~\ref{eq:prelim-advantage}. Numerical sampler fidelity can therefore remain important even when token-level correction is applied.
\section{KV Concentration in Asynchronous Rollout}
\label{sec:3}

Quota-based partial rollout avoids the low-utilization tail of synchronous generation by continuously filling available decode slots. This improvement, however, does not eliminate the underlying generation work. Instead, it changes how that work is distributed across decode steps, raising a central question for accelerating asynchronous rollout: \emph{Which component of the decode workload becomes the dominant cost when high concurrency is sustained across long responses?}

We answer this question by first characterizing how asynchronous scheduling concentrates KV processing into fewer decode steps and validating this behavior in practical rollout traces. We then compare candidate quantization targets and show that KV cache quantization most directly addresses the resulting bottleneck.

\subsection{Asynchronous Rollout Concentrates KV Work}
\label{sec:3.1}

Consider a fixed realized set of responses with prompt lengths $P_i$ and generation lengths $L_i$. Scheduling does not change the total number of generated tokens, $\sum_i L_i$. Likewise, a response with prompt length $P$ and generation length $L$ accesses $Q(P,L)=\sum_{g=0}^{L-1}(P+g)=PL+L(L-1)/2$ logical KV positions during decoding, regardless of when those accesses occur. Asynchronous scheduling therefore does not reduce the total logical attention work required to generate this fixed response set. Rather, by advancing more responses at each decode step, it concentrates the same work into fewer sequential steps. We refer to this effect as \emph{KV concentration}.\footnote{This is a decode-only accounting for a fixed realized response set. Responses are followed to completion across quota boundaries, while prefill and carried-prefix reconstruction are excluded. End-to-end comparisons additionally reflect policy-dependent response lengths.}

We next formalize this intuition to show that KV concentration is not merely an empirical artifact of a particular rollout trace. Let $\mathcal R_t$ denote the active responses at decode step $t$ and let $B_t=|\mathcal R_t|$. If response $i$ has a retained context of length $\ell_i(t)$, the mean active context length is $\bar\ell_t=B_t^{-1}\sum_{i\in\mathcal R_t}\ell_i(t)$, and the corresponding nominal resident KV volume is $V_t=\kappa B_t\bar\ell_t$, where $\kappa$ denotes the number of bytes required to store K and V for one context position across all layers under the given precision and parallelism. This quantity describes the logical volume of active contexts rather than allocated pool capacity or physical HBM traffic. For each schedule $s\in\{\mathrm{sync},\mathrm{async}\}$, we use $\overline\ell_s$ and $\overline V_s$ to denote the corresponding long-run averages over decode steps. These averages weight decode calls equally and exclude prefill and idle intervals.

\paragraph{A structural consequence of continuous refill.}
To isolate the effect of scheduling from implementation-specific constraints, we fix $B\geq2$ decode slots and define \emph{continuous refill} as an idealized schedule in which every completed response is replaced before the next decode step, while unfinished responses persist across collection boundaries. Consequently, $B_t=B$ at every decode step. We compare this schedule with synchronous rollout, which starts each wave with $B$ responses and waits until all responses finish.

\begin{proposition}[Scheduling-induced KV concentration]
\label{prop:kv-concentration}
Let $B\geq2$ and let $L_1,\ldots,L_B\overset{\mathrm{i.i.d.}}{\sim}\mathcal D$ be positive integer-valued response lengths satisfying $\mathbb E[L^2]<\infty$. Define $M_B=\max_{1\leq i\leq B}L_i$. For a fixed prompt length $P$, the long-run decode-step averages under synchronous rollout and continuous refill satisfy
\begin{equation}
    \overline\ell_{\mathrm{async}}
    =
    P+\frac{\mathbb E[L(L-1)]}{2\mathbb E[L]},
    \qquad
    \mathcal C_{\mathrm{KV}}
    \coloneqq
    \frac{\overline V_{\mathrm{async}}}
         {\overline V_{\mathrm{sync}}}
    =
    \frac{\mathbb E[M_B]}{\mathbb E[L]}
    \geq 1.
    \label{eq:kv-concentration}
\end{equation}
Moreover, $\mathcal C_{\mathrm{KV}}>1$ whenever $\mathcal D$ is non-degenerate.
\end{proposition}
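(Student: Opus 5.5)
Both schedules reduce to renewal-reward computations over independent cycles, followed by a short convexity argument for the inequalities.

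\textbf{Continuous refill.} With continuous refill each of the $B$ slots evolves independently as a renewal process. Slot $b$ hosts responses with i.i.d.\ lengths $L\sim\mathcal D$ back to back. During the decode steps of a response of length $L$, its retained context at the $g$-th step is $P+g$ for $g=0,\ldots,L-1$. Over one renewal cycle, the accumulated context is therefore $Q(P,L)=PL+L(L-1)/2$ and the cycle lasts $L$ steps. Since $\mathbb E[L^2]<\infty$, both rewards are integrable, and the renewal-reward theorem gives, almost surely,
\[
\frac{1}{T}\sum_{t<T}\ell_b(t)\to \frac{\mathbb E[Q(P,L)]}{\mathbb E[L]}=P+\frac{\mathbb E[L(L-1)]}{2\mathbb E[L]}.
\]
Because $B_t=B$ at every step, $\bar\ell_t$ is the average of the $B$ slot contexts. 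The step-average of $\bar\ell_t$ is then the average of the $B$ slot limits, which yields the first identity. It follows that $\overline V_{\mathrm{async}}=\kappa B\,\overline\ell_{\mathrm{async}}$, interpreting $\overline V$ as the step-average of $V_t$.

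\textbf{Synchronous rollout.} Each wave is an i.i.d.\ cycle consisting of $B$ responses with lengths $L_1,\ldots,L_B$, and it lasts $M_B$ decode steps. At step $g$ of a wave, the active responses are those with $L_i>g$, each with context $P+g$. Hence $V_t=\kappa\sum_i (P+g)\mathbf 1\{L_i>g\}$. Summing over the wave's steps gives $\kappa\sum_i Q(P,L_i)$, with mean $\kappa B\,\mathbb E[Q(P,L)]$. The cycle length has mean $\mathbb E[M_B]\le B\,\mathbb E[L]<\infty$. Renewal-reward over waves then gives
\[
\overline V_{\mathrm{sync}}=\frac{\kappa B\,\mathbb E[Q(P,L)]}{\mathbb E[M_B]}.
\]
Taking the ratio with the async value $\kappa B\,\mathbb E[Q]/\mathbb E[L]$ cancels $\kappa B\,\mathbb E[Q]$ and gives $\mathcal C_{\mathrm{KV}}=\mathbb E[M_B]/\mathbb E[L]$. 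The cancellation needs $\mathbb E[Q]>0$, which holds because $L\ge1$ and either $P\geq1$ or $\Pr(L\geq2)>0$. The degenerate case $P=0$, $L\equiv1$ can be handled separately or excluded.

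\textbf{Inequalities.} Since $M_B\ge L_1$, we have $\mathbb E[M_B]\ge\mathbb E[L]$. For strictness, suppose $\mathcal D$ is non-degenerate. Then there exist values $a<b$ with positive probability. On the event $\{L_1=a,\,L_2=b\}$, which has positive probability because $B\ge2$, we get $M_B-L_1\ge b-a>0$. Since $M_B-L_1\ge0$ everywhere, this gives $\mathbb E[M_B]>\mathbb E[L]$.

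The main obstacle is making the notion of ``long-run decode-step average'' rigorous in the synchronous case. There, averages are taken per decode call while the batch size varies, so one must be careful that $\overline V_{\mathrm{sync}}$ is the ratio of expected cycle reward to expected cycle length, not the mean of per-wave averages. The renewal-reward theorem, with the integrability supplied by $\mathbb E[L^2]<\infty$, is what justifies this. For the refill case, the point to check is that the slots need not be synchronized: the average of per-slot ergodic limits suffices, so no joint stationarity argument is required.
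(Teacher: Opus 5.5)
Your proposal is correct and follows the paper's proof essentially step for step: a per-slot renewal-reward argument with cycles $(L,Q(P,L))$ averaged over the $B$ slots, a wave-level renewal-reward argument with cycles $(M_B,\kappa\sum_i Q(P,L_i))$ using $M_B\leq\sum_i L_i$ for integrability, and the event $\{L_1=a,L_2=b\}$ for strictness. The one point the paper makes explicit that you leave implicit is that the reward accrued in the incomplete terminal cycle is negligible, which the paper proves in a separate lemma via Borel--Cantelli; here it also follows directly because the cumulative reward is nonnegative and is sandwiched between the rewards of $N(t)$ and $N(t)+1$ completed cycles.
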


\begin{figure}[t]
  \centering
  \includegraphics[width=0.95\linewidth]{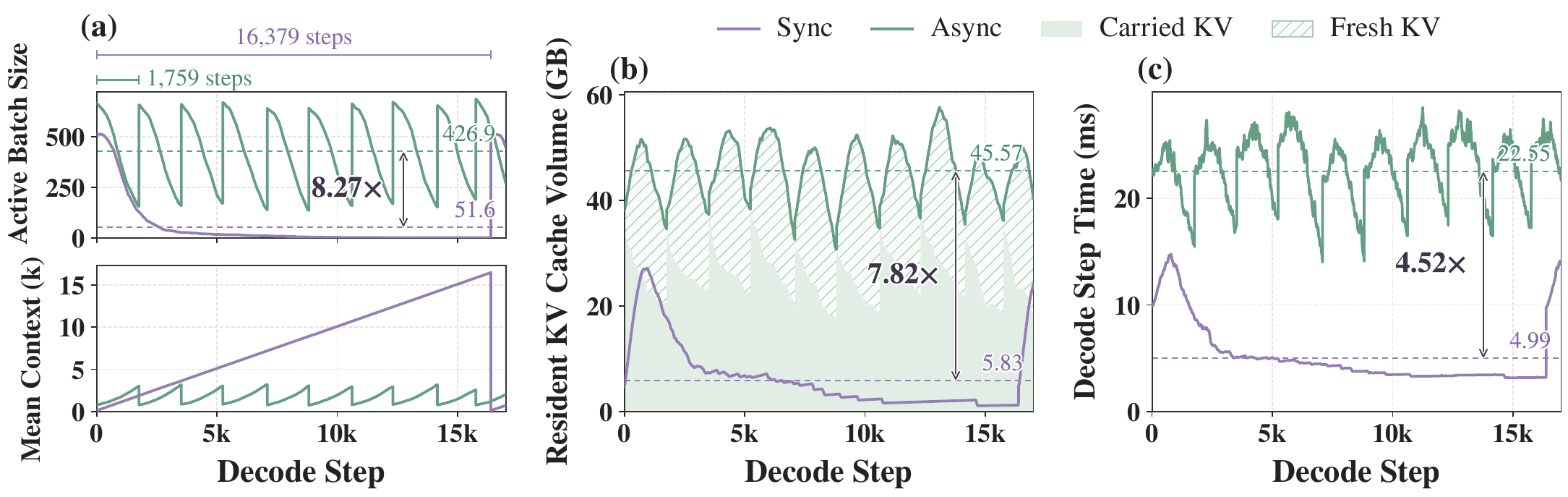}
  \vspace{-5pt}
  \caption{Synchronous and asynchronous rollout traces for Qwen3-4B-Base at mid-training (iteration 300) on H200 (DP2$\times$TP2), with matched KV budgets and concurrency limits. (a) Active batch size and mean context length. (b) Nominal resident KV volume per GPU. (c) Decode-step latency. Brackets identify individual collection intervals in the displayed replica, while dashed horizontal lines denote means computed from unsmoothed decode calls across both replicas over twelve measured rounds. Nominal resident volume measures active logical contexts rather than reserved memory or physical HBM traffic. Appendix~\ref{app:trace-protocol} specifies the trace and timing protocol.}
  \vspace{-10pt}
  \label{fig:concentration}
\end{figure}

Proposition~\ref{prop:kv-concentration} establishes that KV concentration follows directly from the scheduling structure rather than from a particular empirical workload. The first identity captures the length bias induced by observing active responses at decode steps. Longer responses remain active for more decode steps and are therefore overrepresented in the observed batch, an instance of inspection-paradox length bias~\citep{ross1996stochastic}. Indeed, writing $\mathbb E[L]=\mu$ and $\operatorname{Var}(L)=\sigma^2$ yields $\overline\ell_{\mathrm{async}}=P+(\mu-1)/2+\sigma^2/(2\mu)$, making explicit how response-length variation increases the mean context observed in an occupied slot.

The second identity quantifies this occupancy effect through the KV concentration factor $\mathcal C_{\mathrm{KV}}$. A synchronous wave lasts for $M_B$ decode steps, even though its active batch shrinks as shorter responses finish. Continuous refill immediately reuses these vacated slots and thereby eliminates the low-occupancy tail. Since both schedules perform the same expected logical KV work per completed response, continuous refill yields a larger step-averaged resident KV volume by a factor of $\mathbb E[M_B]/\mathbb E[L]$. For any non-degenerate response-length distribution, $\mathbb E[M_B]>\mathbb E[L]$, and hence $\mathcal C_{\mathrm{KV}}>1$. Moreover, $\mathbb E[M_B]$ is nondecreasing in $B$, so within this idealized model, the concentration factor cannot decrease as the number of decode slots grows. If response lengths are deterministic, then $M_B=L$ almost surely and $\mathcal C_{\mathrm{KV}}=1$.

The proposition is not intended to predict the exact KV volume of a practical rollout system. Its role is to show that, even with the response-length distribution and total logical attention work held fixed, asynchronous refill structurally increases the workload presented to each decode step. Practical rollout additionally involves finite request pools, KV page limits, variable prompt lengths, and quota-based collection, so the magnitude of KV concentration must be measured empirically. Appendix~\ref{app:kv-concentration} provides the proof, occupied-slot accounting, and extensions beyond the idealized setting.

\paragraph{KV concentration in practical rollout traces.}
Figure~\ref{fig:concentration} shows how the two components of KV concentration appear in an actual Qwen3-4B-Base rollout. In panel~(a), the synchronous batch steadily loses active responses and eventually enters a long tail dominated by a small number of unusually long generations. Asynchronous rollout instead admits new work into available slots and carries unfinished responses across collection boundaries. Across all measured decode calls, it maintains an $8.27\times$ larger mean active batch and requires $8.35\times$ fewer decode calls per collection interval.

Sustaining this wider batch changes the cost of each remaining decode step. As shown in panel~(b), synchronous rollout reaches its longest contexts only after most responses have completed, whereas asynchronous rollout processes substantial contexts across many responses simultaneously. Its mean nominal resident KV volume is consequently $7.82\times$ larger, increasing mean decode-step latency by $4.52\times$ in panel~(c). The reduction in sequential steps still produces $1.83\times$ higher decode-service throughput in this trace, excluding prefill and collection work. However, the benefit weakens once the concentrated workload approaches system capacity. At the late 8B checkpoint, higher per-step cost and a fixed page budget reduce asynchronous decode-service throughput to $0.98\times$ that of synchronous rollout (Appendix~\ref{app:phase-results}). Thus, asynchronous scheduling removes the utilization tail only to expose a new scaling constraint: the storage and processing cost of the concentrated KV workload.

\subsection{Why Quantize the KV Cache?}
\label{sec:3.2}

\paragraph{Sustained concurrency shifts the workload toward KV processing.}
Under a wide active batch, model-weight traffic is amortized across more responses, while attention must still process the retained context of every response. For a fixed model and numerical precision, the logical memory demand per generated token scales approximately as $W_{\mathrm{bytes}}/B+\kappa\bar\ell$, separating amortized weight reads from context-dependent KV accesses. Increasing concurrency reduces the relative contribution of weight traffic, whereas longer responses continue to increase KV traffic. KV processing therefore becomes increasingly important precisely in the high-concurrency, long-generation regime created by asynchronous rollout.

\begin{figure}[t]
  \centering
  \includegraphics[width=0.95\linewidth]{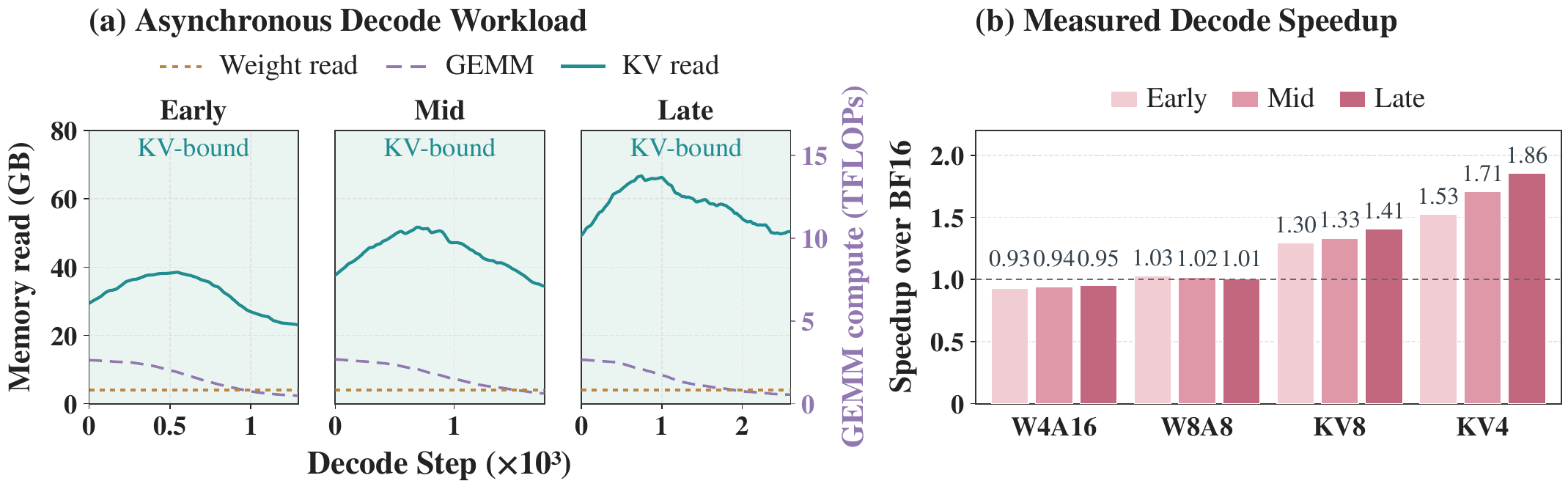}
  \vspace{-5pt}
  \caption{Workload characteristics and quantized decode performance for asynchronous rollout at early, middle, and late stages of Qwen3-4B-Base training (iterations 100, 300, and 500) on H200 with TP2. (a) Nominal per-GPU memory and compute demands, with the axes normalized using peak HBM bandwidth and dense BF16 compute throughput so that equal heights represent equal idealized service times. (b) Decode-forward speedup over BF16 on five fixed, uniform-context workload shapes from each checkpoint. W8A8 denotes FP8 weight-activation execution in this panel. Appendix~\ref{app:decode-replay} specifies the workload construction, numerical implementations, and baseline-time weighting.}
  \label{fig:rollout-quantization}
  \vspace{-10pt}
\end{figure}

Figure~\ref{fig:rollout-quantization}(a) shows how the asynchronous rollout workload evolves across early, middle, and late checkpoints of the same Qwen3-4B-Base training run. KV reads constitute the largest idealized service demand throughout the measured traces and grow as the generated responses become longer. In contrast, estimated weight traffic and GEMM work remain at broadly similar levels across checkpoints.

\paragraph{KV quantization provides the largest measured acceleration.}
Since idealized service demands do not directly determine realized latency, Fig.~\ref{fig:rollout-quantization}(b) replays representative asynchronous decode workloads with W4A16, W8A8, KV8, and KV4 execution. W4A16 uses vLLM Machete, while W8A8 uses an adaptation of vLLM's CUTLASS FP8 GEMM. KV8 and KV4 use our integer paged-attention kernels with BF16 linear layers. Every configuration processes the same decode states, allowing the comparison to isolate numerical execution rather than scheduling differences. The reported timings include the captured forward pass, required numerical conversions, tensor-parallel communication, and captured argmax, while excluding prefill, scheduling, and stochastic sampling outside the graph.

Among the weight-only, weight-activation, and KV cache configurations evaluated here, KV quantization provides the largest speedups. KV4 accelerates decode-forward execution by $1.53\times$, $1.71\times$, and $1.86\times$ at the early, middle, and late checkpoints, respectively, while KV8 achieves $1.30-1.41\times$. The W4A16 and W8A8 implementations remain close to or below BF16 speed on the same workloads. The 8B experiments exhibit the same pattern, with KV4 providing $1.44-1.97\times$ speedup across checkpoints (Appendix~\ref{app:8b-quantization}). These measurements depend on both storage precision and the corresponding numerical and kernel implementation, but consistently identify KV compression as the most effective target in the asynchronous rollout regime.

Fixed-workload replay isolates the reduction in execution cost for a given decode state. During end-to-end rollout, a smaller cache provides an additional benefit by allowing more contexts to fit within the same HBM budget, preserving a wider active batch when KV allocation would otherwise limit request admission. KV cache quantization can therefore accelerate asynchronous rollout through two reinforcing effects: faster processing of the concentrated workload and higher feasible concurrency. The remaining question is whether an aggressive precision such as INT4 can provide these benefits without introducing enough sampling error to alter policy learning.
\section{KV4-RL: Faster Asynchronous Rollout with INT4 KV Caches}
\label{sec:4}

Section~\ref{sec:3.2} identifies KV cache quantization as the most effective acceleration target for asynchronous rollout and shows that INT4 provides the largest execution gains among the evaluated configurations. This finding motivates a seemingly direct approach: storing and processing the rollout KV cache in INT4. In RL rollout, however, the generated responses are not merely inference outputs. They constitute the samples from which policy updates are constructed, so numerical errors that appear modest at an individual prefix can accumulate along trajectories and ultimately alter the learned policy.

The resulting design challenge is to retain the execution and capacity benefits of INT4 while keeping the rollout sampler sufficiently close to the BF16 policy. KV4-RL addresses this tension through three coordinated components. It uses calibration-free online quantization to follow the evolving policy, selectively protects a bounded number of sensitive context positions, and executes the INT4 and protected regions without reconstructing the full cache.

\subsection{A Practical INT4 KV Rollout Path}
\label{sec:4.1}

A quantizer calibrated at a fixed checkpoint can become stale as policy updates shift activation and KV statistics. Repeating offline calibration throughout training would introduce additional synchronization and data-collection overhead. At the same time, KV compression reduces decode cost by lowering cache traffic and increases feasible concurrency by reducing cache storage. Materializing the cache in higher precision before attention would add memory traffic and computation, substantially weakening the decode-cost benefit of compression. We therefore use a calibration-free representation whose quantization parameters are computed online for each newly produced KV vector and whose compressed values are consumed directly by the attention kernel.

We quantize K and V independently using one affine INT4 group for each 128-dimensional token-head vector. The scale $\Delta$ and offset $m$ are stored in FP16 alongside the packed 4-bit codes. For a quantized key $\hat k=\Delta c+m\mathbf1$, the attention score can be written as
\begin{equation}
    q^\top\hat k
    =
    \Delta(q^\top c)
    +
    m\sum_j q_j.
    \label{eq:affine-key}
\end{equation}
Sharing one parameter pair across the head allows the scale and offset to be applied after the dot-product reduction. Finer channel groups may reduce quantization error, but they require more metadata and separately scaled partial reductions. With a head dimension of 128, the resulting representation requires $4+32/128=4.25$ bits per element for the packed codes and quantizer metadata, excluding paging, alignment, and protected side-cache storage.

Following rotation-based KV quantization~\citep{saw}, we apply a fixed normalized Hadamard transform to keys after RoPE and the same transform to queries before attention. Because the transform is orthogonal, it preserves the unquantized QK scores while distributing outlier coordinates before key quantization. Keys are stored in the rotated basis, whereas values remain in their original basis. Combined with the head-wise affine INT4 encoding described above, this yields the calibration-free cache representation that we call \emph{KV4}. To evaluate the representation independently of the deployed integer attention path, we conduct controlled cache-representation ablations that reconstruct the quantized K/V tensors to BF16 and perform the remaining attention computation in BF16. Appendix~\ref{app:quantization-design} uses this setup to evaluate rotation and group-size choices for affine INT4 and to compare its fidelity with NVFP4~\citep{nvfp4} and MXFP4~\citep{mxfp4} at comparable storage costs.

\subsection{Reducing Sampler Mismatch with Selective Protection}
\label{sec:4.2}

\paragraph{Why KV4 alone is insufficient.}
The execution benefit of KV4 does not guarantee that its sampled trajectories remain suitable for policy optimization. Figure~\ref{fig:protection}(a) compares Qwen3-8B-Base training with BF16, KV8, and KV4 rollout under the same asynchronous GRPO configuration with TIS. KV8 closely follows the BF16 trajectory, whereas KV4 tracks it during the early stage but fails to match BF16's later improvement, finishing with substantially lower MATH-500 avg@8 than BF16. Since every checkpoint is evaluated through the same BF16 path, this difference reflects the policies learned from the respective rollout distributions rather than evaluation-time quantization. In this setting, TIS alone is therefore insufficient to preserve learning under KV4 rollout.

To isolate numerical mismatch from trajectory divergence, we compare the next-token distributions produced by BF16 and quantized KV execution at identical BF16-generated prefixes from the same policy checkpoint. Given a set of prefixes $\mathcal H$, we define the shared-prefix conditional mismatch as
\begin{equation}
    \mathcal D_{\mathrm{sampler}}
    =
    \frac{1}{|\mathcal H|}
    \sum_{h\in\mathcal H}
    D_{\mathrm{KL}}\!\left(
        p_{\mathrm{BF16}}(\cdot\mid h)
        \,\middle\|\,
        q_{\mathrm{quant}}(\cdot\mid h)
    \right).
    \label{eq:sampler-kl}
\end{equation}
We compute this quantity before temperature scaling or sampling truncation. The metric captures the conditional distortion introduced by the quantized KV path, but not changes in the prefixes visited by the sampler. It therefore provides a controlled numerical diagnostic that complements the learned-policy comparison in panel~(a). Appendix~\ref{app:fidelity-protocol} describes the evaluation protocol.

Improving fidelity throughout the full INT4 cache through finer groups or additional error compensation would introduce overhead that grows with context length. The near-BF16 learning trajectory of KV8 suggests that INT8 provides sufficient fidelity under the tested training recipe. Applying INT8 to the entire cache, however, would sacrifice part of KV4's execution and capacity advantage. This motivates a bounded selective-precision design that retains the full context in INT4 while using INT8 only for a small set of context positions.

\begin{figure}[t]
  \centering
  \includegraphics[width=0.95\linewidth]{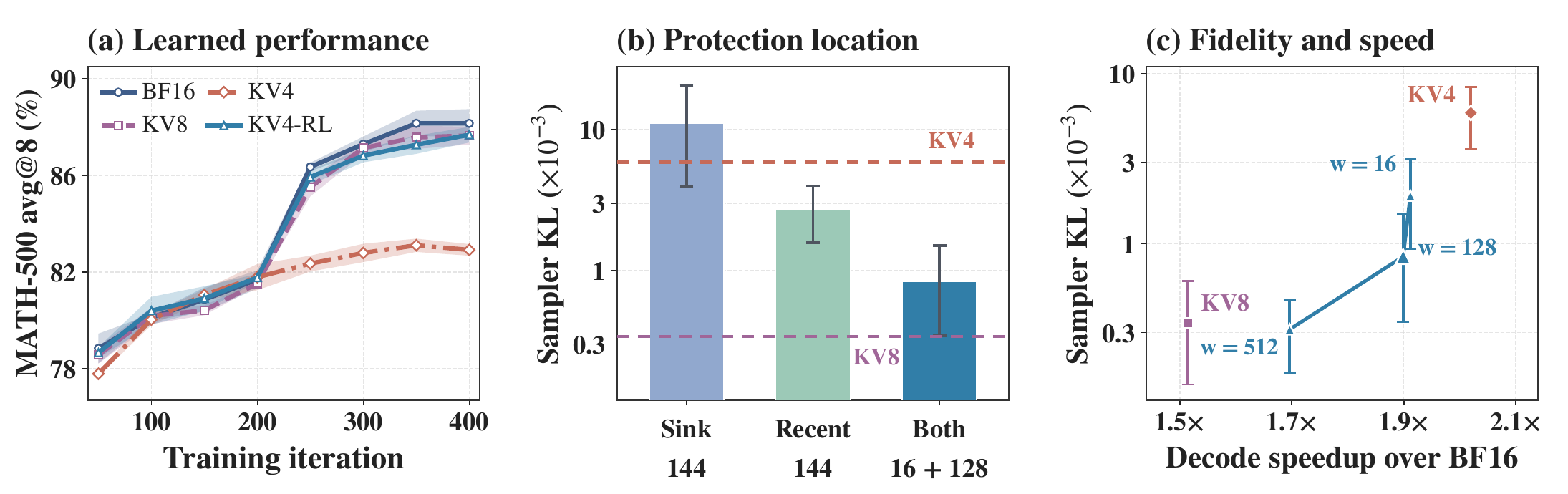}
  \vspace{-5pt}
  \caption{Sampler fidelity, learning, and execution measurements for selective INT8 protection. (a) Qwen3-8B-Base learning curves under BF16 evaluation, with shading denoting standard deviation over three evaluation seeds. (b) Shared-prefix conditional KL for different allocations of a fixed 144-position protection budget, with KV4 and KV8 included as references. (c) Shared-prefix conditional KL and decode-forward speedup over BF16 with sink width $s=16$ and varying recent width $w$. Panels~(b) and~(c) use the mid-training Qwen3-4B-Base checkpoint (iteration 300) and a heterogeneous-state replay suite distinct from Fig.~\ref{fig:rollout-quantization}. KL axes are logarithmic, and error bars show 95\% response-cluster bootstrap intervals. Protected positions use INT8 KV storage with BF16 PV arithmetic.}
  \label{fig:protection}
  \vspace{-10pt}
\end{figure}

\paragraph{Bounded selective protection.}
We refer to the combination of KV4 storage for all positions and bounded INT8 protection as \emph{KV4-RL}. Motivated by attention-sink behavior~\citep{streamingllm,kvsink} and recent-context sensitivity~\citep{kivi,skvq}, KV4-RL maintains INT8 copies of the first $s$ positions in the prompt--response sequence and the most recent $w$ cached positions. Since the protected set contains at most $s+w$ positions per response, its additional precision budget remains bounded independently of context length. With the fixed 144-position budget in Fig.~\ref{fig:protection}(b), allocating 16 positions to the sink and 128 positions to recent context reduces shared-prefix conditional KL by $7.0\times$ relative to KV4 and yields lower mismatch than assigning the full budget to either region alone. These results show that sink and recent protection play complementary roles rather than serving as interchangeable uses of the same precision budget. With the sink width fixed at $s=16$, Fig.~\ref{fig:protection}(c) shows that increasing the recent width reduces sampler mismatch but progressively decreases decode speedup. We select $w=128$ as a practical balance, sharply reducing conditional KL while retaining most of KV4's execution benefit. Additional comparisons of protection width and precision are provided in Appendix~\ref{app:band}.

\paragraph{Fidelity--efficiency trade-off.}
The default configuration achieves $1.90\times$ decode-forward speedup over BF16 and retains approximately $94\%$ of KV4 throughput on the same 4B states. It also restores the late-stage learning improvement absent from the KV4 trajectory in Fig.~\ref{fig:protection}(a), reaching $87.68\%$ MATH-500 avg@8 compared with $88.16\%$ for BF16 rollout under the same TIS correction. Selective protection therefore recovers most of the observed policy-quality loss while retaining most of KV4's execution benefit. Appendix~\ref{app:band} reports the complete fidelity and timing results, and Section~\ref{sec:5} evaluates whether this trade-off carries through to end-to-end acceleration and learned-policy quality across model sizes.

\begin{figure}[t]
  \centering
  \includegraphics[width=\linewidth]{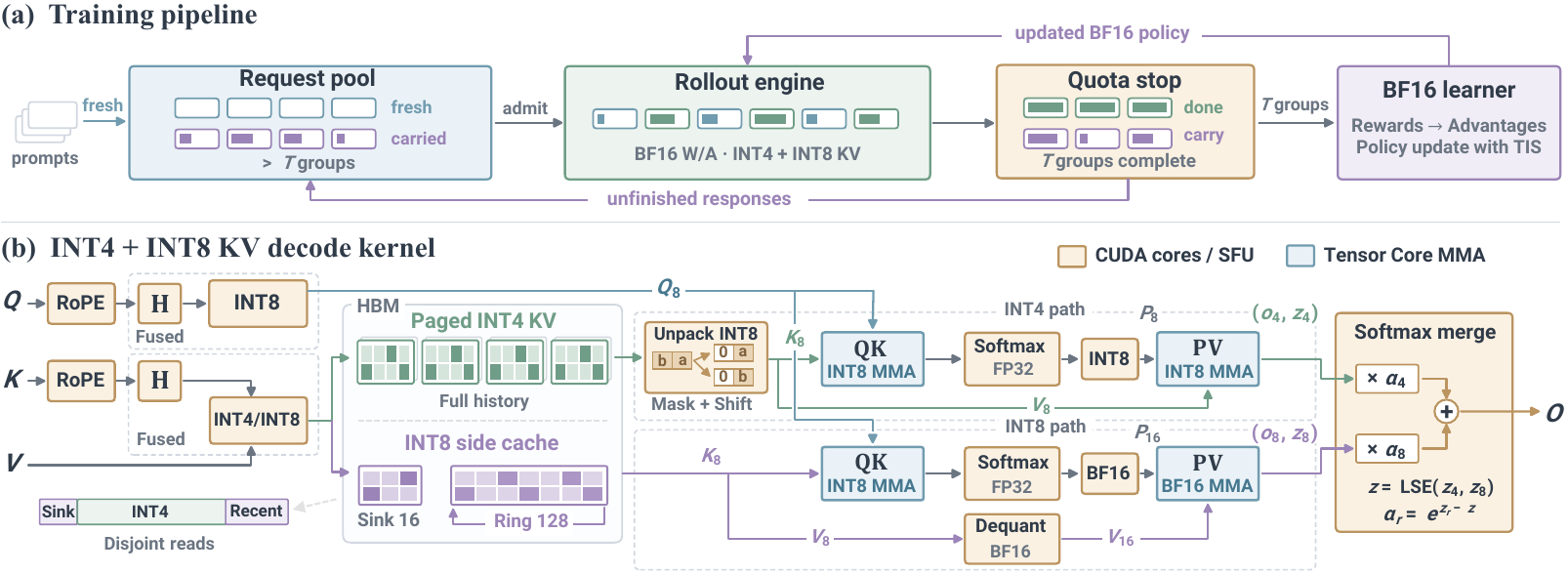}
  \vspace{-5pt}
  \caption{Overview of KV4-RL framework and its attention execution. (a) Quota-based partial rollout transfers completed prompt groups to the BF16 learner and carries unfinished responses across policy updates. (b) The complete history is stored in paged INT4 format, while an INT8 side cache stores the protected sink and recent positions. Attention processes the INT4 and protected regions as disjoint sets and combines their outputs using regional softmax statistics.}
  \label{fig:app-kv4rl-framework}
  \vspace{-10pt}
\end{figure}

\subsection{Efficient Attention Execution}
\label{sec:kernel}

Beyond improving sampler fidelity, the bounded structure of selective protection admits an efficient implementation that preserves the scaling benefits of INT4 KV compression. Unlike finer quantization groups or error-compensation schemes, which introduce additional work for every cached token, its higher-fidelity overhead is restricted to at most $s+w$ positions per response. KV4-RL also maintains the INT4 and protected representations as independent cache structures. The paged INT4 cache stores every position, while a fixed-capacity INT8 side buffer stores additional copies of the sink and recent positions. Both representations are produced directly from the source K/V vectors when each token is generated. Consequently, a position leaving the recent window requires no transfer or requantization into the main cache since its INT4 entry already exists. The evicted side-buffer slot can simply be reused. During attention, protected positions are read from the side buffer and all remaining positions from the paged INT4 cache, forming disjoint regions that avoid double counting. Figure~\ref{fig:app-kv4rl-framework} summarizes this execution design.

A further advantage of this separation is that the two regions can be processed independently and in parallel using numerical paths tailored to their respective roles, and then combined exactly. The main INT4 path quantizes the rotated query to INT8, widens the packed INT4 K/V codes into integer operands, and uses integer QK and PV operations to process the growing context efficiently. The protected path reuses the same prepared INT8 query for QK against the INT8 side keys, while dequantizing the protected values to BF16 for higher-fidelity PV with FP32 accumulation. Since this BF16 PV computation covers at most $s+w$ positions per response, its cost remains bounded. Each path produces a regional attention output and log-normalizer, and the two results are combined through an exact softmax merge. This execution structure allows both caches to remain in their own representations without conversion to a common format. Appendix~\ref{app:kv4-implementation} describes the cache encoding, protected-region allocation, and regional merge in detail.
\section{Experimental Results}
\label{sec:5}

\subsection{Training, Evaluation, and Timing Protocols}
\label{sec:experimental-setup}

\paragraph{Training setup.}
We evaluate KV4-RL on the dense Qwen3-4B-Base and Qwen3-8B-Base models~\citep{qwen3}, using full-parameter GRPO on the English split of \path{open-r1/DAPO-Math-17k-Processed}~\citep{dapo}. Each configuration uses four H200 GPUs, with two TP2 rollout replicas and an FSDP2~\citep{fsdp} learner operating in BF16. Generation and learning alternate on the same GPUs. All rollout precisions share an optimized in-house engine with paged attention, continuous batching, CUDA graphs, and TP communication, while the BF16 baseline uses FlashInfer Tensor Core decode kernels~\citep{flashinfer}. Each iteration collects 128 groups of eight responses with a 16,384-token response cap. TIS clips the token-local correction at $C=5$, and the reference-KL coefficient is $0.001$. We compare KV4-RL against BF16, KV8, and KV4 rollout, training the 4B and 8B models for 600 and 400 updates, respectively. Appendix~\ref{app:engine} describes the common engine optimizations, and Appendix~\ref{app:training-setup} provides the full training configuration, including request-pool and memory-allocation settings.

\paragraph{Learned-policy evaluation.}
All checkpoints are evaluated using BF16 weights and KV caches, so the reported accuracy measures the learned policy independently of the numerical path used during rollout. Evaluation uses temperature $0.6$, top-$p=0.95$, and a 16,384-token response cap. We report avg@$8$ on MATH-500~\citep{math5002024}, OlympiadBench~\citep{olympiadbench2024}, and MinervaMath~\citep{minerva2022}, and avg@$32$ on AIME 2024/2025~\citep{aime2024,aime2025} and AMC 2023~\citep{amc2023}. Here avg@$k$ denotes mean correctness over $k$ sampled answers per problem, distinct from pass@$k$, which records whether at least one answer is correct. Periodic evaluation covers MATH-500 and both AIME sets, while final evaluation covers all six tasks. Each rollout configuration uses one training run, with checkpoints evaluated using three different seeds. The reported standard deviations therefore measure sampling variability at fixed checkpoints. Appendix~\ref{app:evaluation} provides dataset and grading details.

\paragraph{Timing protocol.}
We measure execution time at three nested scopes. Fixed-workload replay measures captured decode forwards at matched states and therefore isolates the execution benefit of KV compression. Rollout collection additionally includes prefill, carried-prefix reconstruction, scheduling, output collection, and reward scoring, while full-iteration time further includes learner computation, parameter synchronization, and phase transitions. End-to-end rollout also captures the capacity benefit of reduced KV storage: under the same memory budget, it can admit more resident context and sustain higher decode concurrency when KV capacity limits scheduling. We exclude initialization, restart downtime, evaluation, and checkpoint saving. Cumulative speedup is computed as the ratio of summed BF16 time to summed method time over the same update interval, rather than as the mean of per-iteration speedup ratios. Since each training configuration generates responses from its own evolving policy, collection and iteration times reflect execution efficiency, feasible concurrency, and the workload produced during learning. Appendix~\ref{app:timing-boundaries} specifies the timing boundaries.

\subsection{End-to-End Acceleration}
\label{sec:5.1}

\begin{figure}[t]
  \centering
  \includegraphics[width=0.95\linewidth]{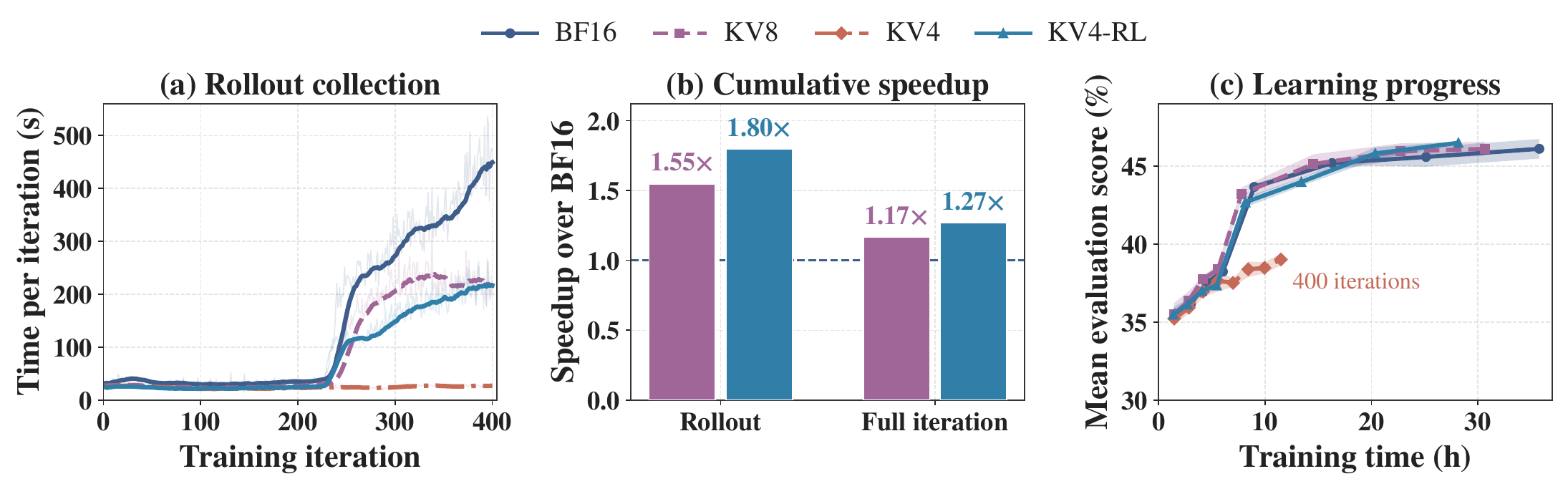}
  \vspace{-5pt}
  \caption{End-to-end acceleration and learning progress for Qwen3-8B-Base over 400 training iterations. (a) Rollout collection time per iteration, with 20-iteration means over faint raw traces. (b) Ratios of cumulative BF16 time to cumulative method time for rollout collection and full iterations. KV4 is omitted because its generated responses are substantially shorter. (c) Equally weighted mean accuracy over MATH-500 and AIME 2024/2025 versus cumulative full-iteration time, with error bars denoting standard deviation over three evaluation seeds. Cumulative time excludes initialization, restart downtime, evaluation, and checkpoint saving.}
  \label{fig:e2e-acceleration}
  \vspace{-10pt}
\end{figure}

\textbf{KV4-RL substantially accelerates rollout and end-to-end training.}~
Across 400 Qwen3-8B-Base updates, KV4-RL achieves $1.80\times$ cumulative rollout speedup and $1.27\times$ cumulative full-iteration speedup over BF16, as shown in Fig.~\ref{fig:e2e-acceleration}. It reaches comparable periodic evaluation accuracy in less cumulative training time, while its run-wide mean response length remains within $1\%$ of BF16 (Fig.~\ref{fig:training-trajectory}(b)). The timing improvement therefore accompanies a closely matched generation workload. Unlike the fixed-workload replay in Section~\ref{sec:3.2}, this end-to-end gain includes both faster processing of the resident KV workload and the additional decode concurrency enabled by fitting more context within the same memory budget. The smaller full-iteration gain reflects the remaining learner-side cost: rollout collection accounts for approximately $47\%$ of cumulative BF16 iteration time, while KV4-RL leaves the BF16 learner computation unchanged. After rollout is accelerated, learner computation accounts for $60.5\%$ of KV4-RL iteration time over the final 50 updates (Appendix~\ref{app:iteration-breakdown}), limiting the fraction of total iteration time that further rollout optimization can reduce.

\textbf{Rollout acceleration grows with response length.}~
KV4-RL provides smaller but consistent gains on Qwen3-4B-Base, where responses remain shorter. Across 600 updates, it achieves $1.57\times$ cumulative rollout speedup and $1.17\times$ cumulative full-iteration speedup (Appendix~\ref{app:4b-continuations}). The run-wide mean response lengths are closely matched at 1.45k tokens for BF16 and 1.46k for KV4-RL, indicating that the timing difference is not explained by a smaller generated workload. The contrast becomes more pronounced late in training: over the final 50 iterations, both configurations average approximately 2.0k tokens per response at 4B, compared with 5.5k at 8B. Consistently, within the 8B experiment, rollout speedup rises from $1.45\times$ over iterations 1--100 to $1.97\times$ over iterations 351--400 as responses become longer. These trends align with the workload analysis in Section~\ref{sec:3.2}. Longer contexts increase both the KV traffic processed at each decode step and pressure on the finite cache pool, strengthening the execution and capacity benefits of KV compression.

\subsection{Final Accuracy and Learning Trajectories}
\label{sec:5.2}

\begin{table}[t]
  \centering
  \caption{
  Final Qwen3-Base benchmark accuracy (\%) under BF16 evaluation.
  The 4B and 8B checkpoints are evaluated after 600 and 400 training iterations, respectively.
  MATH-500, OlympiadBench, and MinervaMath use avg@$8$, while AIME 2024/2025 and AMC 2023 use avg@$32$.
  Subscripts denote standard deviation over three evaluation seeds.
  Mean assigns equal weight to the six tasks within each seed before computing the reported mean and standard deviation.
  All values are rounded to one decimal place.
  }
  \label{tab:final-performance}
  \vspace{3pt}
\begingroup
\small
\setlength{\tabcolsep}{3pt}
\renewcommand{\arraystretch}{1.0}
\resizebox{0.95\linewidth}{!}{%
\begin{tabular}{@{}lcccccccc@{}}
\toprule
& & \multicolumn{3}{c}{avg@$8$} & \multicolumn{3}{c}{avg@$32$} & \\
\cmidrule(lr){3-5}\cmidrule(lr){6-8}
\textbf{Model} & \textbf{Rollout} & \textbf{MATH-500} & \textbf{OlympiadBench} & \textbf{MinervaMath} & \textbf{AIME24} & \textbf{AIME25} & \textbf{AMC23} & \textbf{Mean} \\
\midrule
\multirow{4}{*}{Qwen3-8B-Base} & BF16 & $88.2_{\pm 0.6}$ & $56.3_{\pm 0.2}$ & $51.9_{\pm 0.4}$ & $29.5_{\pm 1.9}$ & $20.7_{\pm 0.4}$ & $67.9_{\pm 1.2}$ & $52.4_{\pm 0.5}$ \\
 & KV8 & $87.6_{\pm 0.4}$ & $56.6_{\pm 0.4}$ & $52.0_{\pm 0.6}$ & $29.4_{\pm 0.6}$ & $21.3_{\pm 0.5}$ & $69.9_{\pm 1.1}$ & $52.8_{\pm 0.4}$ \\
 & KV4 & $82.9_{\pm 0.2}$ & $50.1_{\pm 0.2}$ & $51.2_{\pm 0.8}$ & $18.5_{\pm 0.7}$ & $15.6_{\pm 0.3}$ & $68.4_{\pm 1.4}$ & $47.8_{\pm 0.1}$ \\
\noalign{\vskip 1.5pt}
\cdashline{2-9}[4pt/2pt]
\noalign{\vskip 1.5pt}
 & KV4-RL & $87.7_{\pm 0.3}$ & $56.8_{\pm 0.4}$ & $51.8_{\pm 1.4}$ & $29.7_{\pm 0.6}$ & $22.1_{\pm 0.7}$ & $68.9_{\pm 1.1}$ & $52.8_{\pm 0.1}$ \\
\midrule
\multirow{4}{*}{Qwen3-4B-Base} & BF16 & $83.0_{\pm 0.2}$ & $50.1_{\pm 0.2}$ & $48.0_{\pm 0.7}$ & $19.1_{\pm 0.2}$ & $16.5_{\pm 0.4}$ & $65.2_{\pm 1.0}$ & $47.0_{\pm 0.1}$ \\
 & KV8 & $82.5_{\pm 0.1}$ & $49.2_{\pm 0.5}$ & $46.9_{\pm 0.5}$ & $18.2_{\pm 0.6}$ & $16.8_{\pm 1.0}$ & $66.2_{\pm 0.3}$ & $46.6_{\pm 0.3}$ \\
 & KV4 & $81.7_{\pm 0.5}$ & $49.1_{\pm 0.5}$ & $48.0_{\pm 0.7}$ & $16.8_{\pm 0.4}$ & $14.1_{\pm 0.4}$ & $67.6_{\pm 1.3}$ & $46.2_{\pm 0.0}$ \\
\noalign{\vskip 1.5pt}
\cdashline{2-9}[4pt/2pt]
\noalign{\vskip 1.5pt}
 & KV4-RL & $82.4_{\pm 0.3}$ & $48.8_{\pm 0.0}$ & $49.1_{\pm 0.7}$ & $18.2_{\pm 0.9}$ & $13.6_{\pm 0.1}$ & $67.5_{\pm 0.7}$ & $46.6_{\pm 0.3}$ \\
\bottomrule
\end{tabular}
}
\endgroup

\end{table}

\textbf{KV4-RL maintains competitive final-policy accuracy.}~
Table~\ref{tab:final-performance} shows that KV4-RL remains within 0.5 percentage points of BF16 in the equally weighted six-task mean at both model sizes. At 8B, KV4-RL achieves $52.8\%$ mean accuracy compared with $52.4\%$ for BF16, while the corresponding values at 4B are $46.6\%$ and $47.0\%$. Task-level outcomes vary around these aggregate results. At 4B, for example, AIME 2025 accuracy decreases from $16.5\%$ to $13.6\%$, while improvements on other tasks partly offset this drop. Additional learning curves on MATH-500 and both AIME benchmarks, together with endpoint pass@$k$ results, are reported in Appendix~\ref{app:protection-learning}.

In contrast, KV4 exhibits a clearer aggregate degradation, falling 4.6 percentage points below BF16 at 8B and 0.8 points below BF16 at 4B. The larger 8B deficit occurs in the longer-context regime, although this comparison does not isolate the effects of response length and model size. KV4-RL closes this gap at 8B while retaining most of KV4's execution benefit. Together with the controlled fidelity and learning results in Section~\ref{sec:4.2}, this comparison supports selective protection as the component that enables INT4 KV rollout to preserve learned-policy quality under repeated policy optimization.

\begin{figure}[t]
  \centering
  \includegraphics[width=0.95\linewidth]{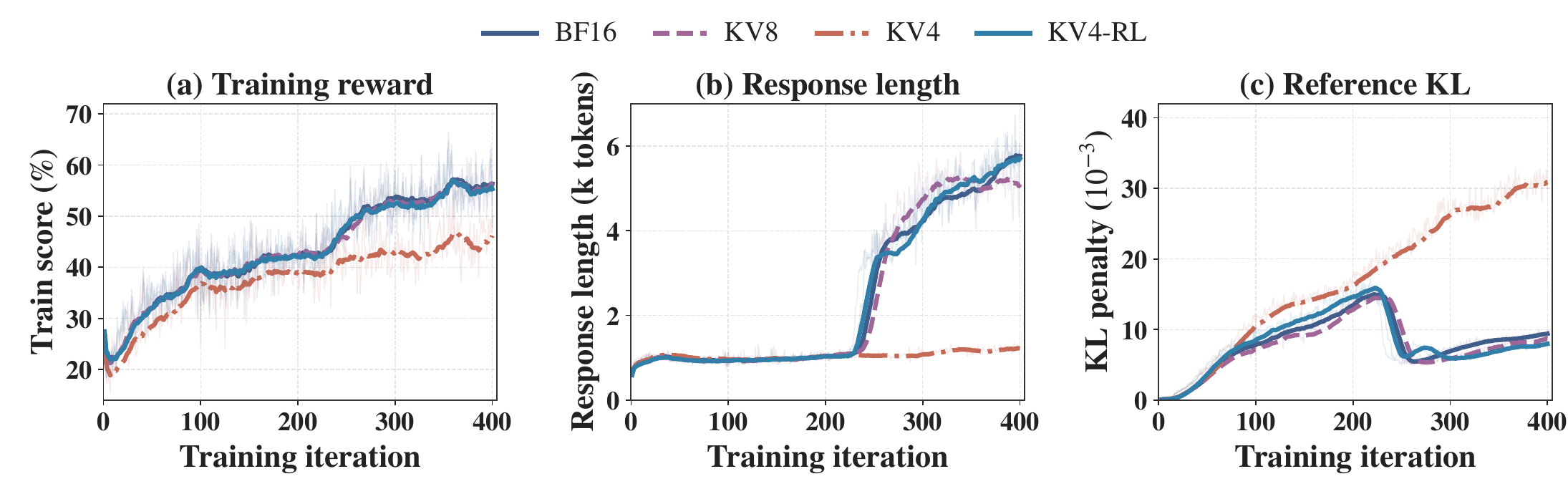}
  \vspace{-5pt}
  \caption{Training trajectories for Qwen3-8B-Base. (a) Mean correctness reward. (b) Mean completed-response length. (c) Logged reference-KL penalty computed with the $k_3$ estimator against the initial policy. This statistic is distinct from the shared-prefix sampler KL in Eq.~\ref{eq:sampler-kl}. Thick curves show 20-iteration means, and faint curves show the corresponding raw traces.}
  \label{fig:training-trajectory}
  \vspace{-10pt}
\end{figure}

\textbf{KV4-RL preserves the learning and generation trajectory of BF16 rollout.}~
Figure~\ref{fig:training-trajectory} shows how the generated workload evolves alongside learning. KV4-RL follows BF16 through the transition to longer responses and higher training rewards, while maintaining a similar level of logged reference-KL penalty. KV4 also improves its training reward, but exhibits less late-stage response-length growth and a continuing increase in the reference-KL statistic. This statistic measures divergence from the fixed initial policy, whereas Eq.~\ref{eq:sampler-kl} measures conditional numerical mismatch between execution paths at shared prefixes. KV4's shorter responses reduce the amount of generated work as well as collection time, so its raw timing does not provide a comparable-workload acceleration result and is omitted from Fig.~\ref{fig:e2e-acceleration}(b). KV4-RL instead closely tracks the BF16 response-length and learning trajectories, providing a better-controlled workload for the cumulative timing comparison.

\section{Discussion}
\label{sec:6}

\subsection{KV Quantization as a Scaling Primitive for Long-Horizon RLVR}

\paragraph{From sustained concurrency to KV-aware scaling.}
Asynchronous and partial rollout improve utilization by sustaining generation across long and heterogeneous responses~\citep{areal,rollflash,kimi,april}. Our central finding is that this concurrency also concentrates context processing into fewer and more expensive decode steps, shifting the rollout bottleneck toward the KV cache. KV quantization directly addresses this workload by reducing both the execution cost of each decode state and the memory required to sustain a large active batch. KV4-RL further shows that these gains can translate into faster policy optimization when aggressive compression is paired with sufficient sampler fidelity. KV quantization therefore provides a systems primitive for converting sustained rollout concurrency into end-to-end RLVR acceleration. This mechanism depends on the workload maintained by the rollout engine and is also relevant to disaggregated systems whose rollout workers retain many long prefixes~\citep{staleflow}.

\paragraph{Expanding the scaling frontier.}
Recent frontier architectures substantially reduce context-dependent memory and computation through compressed, local, or hybrid attention designs~\citep{gemma3,minimax01,kimilinear,deepseekinsights}. These advances expand the feasible context and concurrency range, but the released capacity is increasingly used to support longer reasoning, larger active batches, and more persistent generation state. The same pressure therefore reappears as systems approach the new operating frontier. MoE architectures alter the balance of FFN computation and communication, but their KV workload remains governed by the attention structure~\citep{qwen3,kimik2,deepep}. This consideration becomes particularly important for frontier reasoning and agentic RL, where rollouts extend into long multi-turn trajectories with tool outputs, environment observations, and persistent histories~\citep{deepseekr1,kimi,kimik2,earl2026,agentcontext2026}. KV quantization complements these architectural advances by pushing the feasible context and concurrency frontier further, positioning it as an increasingly important foundation for scaling long-horizon RLVR.

\subsection{Limitations}

While encouraging, our evaluation focuses on two dense Qwen3 models and mathematical reasoning workloads under a colocated asynchronous GRPO configuration. This setting directly captures the long-generation, high-concurrency regime in which KV concentration becomes important, but it does not cover the broader diversity of modern RLVR systems. Although the underlying mechanism applies whenever rollout workers sustain many long resident contexts, its quantitative impact may vary with model architecture, attention structure, scheduler design, and system placement. Further validation on MoE and hybrid-attention models, disaggregated rollout systems, and broader long-horizon workloads such as agentic and multi-turn RLVR remains an important direction.
\section{Conclusion}
\label{sec:conclusion}

In this work, we identified KV concentration as a defining systems consequence of asynchronous RLVR rollout: sustained concurrency reduces low-utilization tail generation but concentrates context processing into fewer and more expensive decode steps. Our analysis and matched-workload measurements showed that this workload shift makes KV cache quantization the most effective acceleration target among the tested quantization configurations. We further found that INT4 KV rollout can degrade learning despite token-level importance correction, demonstrating that aggressive cache compression must be accompanied by sufficient sampler fidelity. Guided by these findings, we proposed KV4-RL, which combines calibration-free INT4 KV storage with bounded INT8 protection of attention sinks and recent context. Across full-parameter GRPO experiments, KV4-RL preserves learned-policy quality close to BF16-rollout training while substantially accelerating rollout and end-to-end training, with larger gains as responses become longer. Together, these results establish KV-aware precision allocation as a practical path for converting sustained rollout concurrency into training acceleration and position KV quantization as an important scaling primitive for long-horizon RLVR.

\FloatBarrier
\bibliographystyle{plainnat}
\bibliography{paper}

\clearpage
\appendix
\counterwithout{equation}{section}

\setcounter{figure}{0}
\setcounter{table}{0}
\setcounter{equation}{0}
\renewcommand{\thefigure}{A\arabic{figure}}
\renewcommand{\thetable}{A\arabic{table}}
\renewcommand{\theequation}{A\arabic{equation}}
\ifdefined\theHfigure\renewcommand{\theHfigure}{appendix.\arabic{figure}}\fi
\ifdefined\theHtable\renewcommand{\theHtable}{appendix.\arabic{table}}\fi
\ifdefined\theHequation\renewcommand{\theHequation}{appendix.\arabic{equation}}\fi

\section{Implementation Details}
\label{app:implementation}
\label{app:method}

All experiments use an in-house RLVR framework with a tensor-parallel rollout engine and an FSDP2 learner. The implementation incorporates the core inference and training optimizations used in vLLM and verl~\citep{vllm,verl}, including direct reuse of vLLM's communication primitives and FlashInfer attention~\citep{flashinfer}. These optimizations are enabled in the BF16 baseline as well as the quantized configurations. Precision changes are therefore evaluated within a common optimized execution stack.

\paragraph{Why a separate framework.}
Combining quota-based partial rollout with our paged INT4 attention requires coordinated control over response lifetime and cache execution. Prefixes carried across a policy update must be resumed with priority and used to rebuild both the packed KV history and its protected side cache. The custom cache representation also affects memory-budget accounting and attention dispatch within captured model forwards. Supporting this combination through verl and vLLM would require changes across both the collection layer and the engine's cache interfaces. We implement these components together to make their interaction explicit while retaining the serving and learning optimizations detailed below.

\subsection{Collection and Policy Updates}
\label{app:collection}

Fig.~\ref{fig:app-async-rollout} in Section~\ref{sec:preliminaries} illustrates how collection proceeds across policy updates. Synchronous collection finishes every response assigned to the current iteration. Its request queue eventually empties and the remaining long responses continue with progressively fewer companions. Partial collection instead maintains an oversubscribed pool and stops after enough complete prompt groups have become available. The collector selects the oldest $T$ completed groups for training and retains any additional completed groups for later selection. Unfinished responses preserve their token prefixes and resume with priority when the next collection begins.

In the evaluated configuration, the global quota is $T=128$ groups with $G=8$ responses per group. An oversubscription factor of $\beta=1$ maintains up to $T(1+\beta)$ groups across the rollout replicas. A group that reaches the age limit $\eta=12$ must complete before collection can stop. This requirement can extend collection beyond the completion quota. Here asynchronous rollout refers to the lifetime of responses across updates. Generation and optimization alternate on the same four GPUs.

Each response retains the behavior log-probability recorded when each token was sampled. These probabilities remain attached to the trajectory when it crosses an update or waits in the completed-group pool. The learner evaluates the current pre-update BF16 policy on the collected tokens and forms the TIS correction in Eq.~\ref{eq:prelim-tis}. The rollout configuration uses temperature one without top-$k$ or nucleus truncation, so the recorded softmax probabilities correspond to its sampling distribution. After optimization, the updated weights are copied into the rollout engine. Re-prefill then rebuilds the carried prefixes under those weights while preserving the sampled tokens and their historical behavior probabilities.

\subsection{Common Execution Optimizations}
\label{app:engine}

\paragraph{Scheduling and attention.}
The rollout engine stores KV entries in 16-token pages and admits requests according to available pages and the concurrency limit. Continuous batching reuses slots released by completed responses while queued requests remain. Chunked prefill shares a token budget with decode requests, and active-row compaction keeps completed requests out of subsequent model forwards. The BF16 baseline uses FlashInfer paged attention with its Tensor Core decode path~\citep{flashinfer}. These choices provide the paged storage and dynamic batching mechanisms used in modern serving engines such as vLLM~\citep{vllm}.

\paragraph{Model execution and communication.}
The engine fuses QKV projections and the MLP gate/up projections. Compiled model execution reduces dispatch overhead and intermediate tensor allocations around attention. Multiple CUDA graphs cover different decode batch sizes, allowing the scheduler to select a captured forward as concurrency changes. Tensor-parallel communication reuses vLLM's custom all-reduce implementation with PyNCCL as a fallback. BF16 and quantized samplers share these scheduling and execution mechanisms. Precision-specific conversion work remains inside the timed forward when it is required during decoding.

\paragraph{Learner execution.}
The learner updates all model parameters with FSDP2. BF16 forward and backward computation use FP32 master parameters and optimizer states. Padding is removed before variable-length attention, and token-budget microbatches balance the work of responses with different lengths. Fused projection and log-probability computation avoid retaining full-vocabulary logits for every training token. Activation checkpointing reduces the update's peak memory. The FSDP2 state-management and offloading utilities follow verl's implementation, while the rollout engine and phase controller support the combined partial-collection and cache-precision design.

\paragraph{Phase memory management.}
Before rollout, optimizer-only state and the frozen reference model are offloaded to make memory available for the KV pool. The updated learner parameters are streamed into the engine's tensor-parallel weight buffers. Before optimization, rollout allocations are released and the learner state is restored. Virtual-memory-backed allocations preserve addresses used by captured graphs across these transitions. The memory profiler reserves model state and execution workspace before sizing the KV pool, including precision-dependent metadata and protected side-cache storage. If $M_q$ denotes the memory remaining for cache format $q$ and $b_q$ denotes its bytes per 16-token page, the allocator provisions $\lfloor M_q/b_q\rfloor$ pages. The scheduler admits a request only when both a request slot and its required pages are available. KV compression can therefore improve rollout through two distinct mechanisms: reducing the cost of processing the current decode state and delaying the point at which page capacity restricts active concurrency. The bounded side-cache allocation reduces the available page pool for KV4-RL but does not grow with the retained context length.

\subsection{Paged INT4 Attention with Selective Protection}
\label{app:kv4-implementation}
\label{app:storage}

Section~\ref{sec:kernel} and Fig.~\ref{fig:app-kv4rl-framework} summarize the numerical paths and their integration with partial rollout. The following paragraphs detail the encoding, allocation, and regional merge.

\paragraph{Encoding and allocation.}
Following the K-only rotation design of SAW-INT4~\citep{saw}, Q and K use a normalized Hadamard transform after RoPE, while V remains in its original basis. Each 128-element group uses affine INT4 codes with FP16 scale and offset. Codes occupy half a byte per scalar and metadata adds $4/128$ bytes. INT8 side entries use symmetric per-token/head scaling and are computed from source vectors before INT4 rounding. The INT4 history covers every position, so a token leaving the recent ring needs no historical requantization. Sink size 16 aligns the beginning of the unprotected INT4 region with a page boundary. Side storage is reserved per request slot and included in the complete cache allocation.

\paragraph{Integer decode path.}
Query rotation and INT8 quantization are fused into query preparation. The attention kernel loads packed pages into shared memory and widens four-bit codes using mask and shift operations. Integer QK MMA accumulates into INT32. Scales and affine offsets convert these accumulators to floating-point scores. After FP32 softmax, the main PV path forms an eight-bit probability operand with the required value-scale factors. A second integer MMA and affine correction produce the regional output without materializing the full history in BF16.

\paragraph{Protected path and reconstruction.}
The protected path reuses the prepared query for integer QK and dequantizes only the INT8 side values for BF16 PV with FP32 accumulation. It avoids the main path's probability quantization. Prefill and carried-prefix reconstruction use BF16 attention and populate both cache representations from newly computed K/V. Historical behavior probabilities remain attached to the sampled tokens during this reconstruction, as described in Section~\ref{app:collection}.

\paragraph{Regional softmax merge.}
For each disjoint region $r\in\{4,8\}$, let $o_r$ be its locally normalized output and $z_r$ its log-normalizer. The outputs are merged as
\begin{equation}
 z=\log(e^{z_4}+e^{z_8}),\qquad
 o=e^{z_4-z}o_4+e^{z_8-z}o_8.
 \label{eq:merge-main}
\end{equation}
A stable log-sum-exp reduction evaluates this identity, with zero weight for empty regions. The merge is exact for the regional scores and outputs, which already reflect their numerical paths. The same identity merges context splits within each region.

\section{Experimental Setup and Measurement Protocols}
\label{app:experimental-setup}
\label{app:section34-evidence}

Our experiments separate workload characterization from execution timing and learned-policy evaluation. Frozen-checkpoint traces expose the workload created by scheduling. Decode replay holds that workload fixed when comparing precision configurations. Training and common-sampler evaluation then measure the resulting rollout cost and policy quality. The protocols below specify the population and timing boundary used in each comparison.

\subsection{Models and Training Configuration}
\label{app:training-setup}

We train Qwen3-4B-Base and Qwen3-8B-Base~\citep{qwen3} on the English training split of \path{open-r1/DAPO-Math-17k-Processed}~\citep{dapo}. Each problem is supplied without a chat template. We append the instruction ``Let's think step by step and output the final answer within \texttt{\textbackslash boxed\{\}}.'' Math-Verify assigns a correctness reward by comparing the extracted answer with the reference. The training scorer follows verl's boxed-answer verification procedure. We use full-parameter GRPO with group-normalized advantages and token-mean loss aggregation. Table~\ref{tab:app-training} lists the shared recipe.

\begin{table}[htbp]
  \centering
  \small
  \caption{Shared training configuration with ten learning-rate warmup updates. Frozen trace and replay experiments use explicit KV budgets instead of automatic training-pool sizing.}
  \label{tab:app-training}
  \begin{tabular}{@{}p{0.44\linewidth}p{0.52\linewidth}@{}}
    \toprule
    Setting & Value \\
    \midrule
    Hardware per training run & 4$\times$ NVIDIA H200 GPUs \\
    Rollout parallelism & Two DP replicas with TP2 each \\
    Learner & FSDP2 with BF16 compute and FP32 master weights \\
    Prompt groups / responses per group & $T=128$ / $G=8$ \\
    Prompt / response token limits & 1,024 / 16,384 \\
    Sampling temperature / top-$p$ & $1.0$ / $1.0$ with top-$k$ disabled \\
    Request-pool oversubscription / age limit & $\beta=1$ / $\eta=12$ iterations \\
    Optimizer & AdamW with $(\beta_1,\beta_2)=(0.9,0.999)$ \\
    Learning rate / warmup & $10^{-6}$ / 10 updates then constant \\
    Weight decay / gradient-norm limit & $0.01$ / $1.0$ \\
    PPO epochs / global minibatch & One / all $T$ prompt groups \\
    Policy-ratio clipping / dual clip & $[0.8,1.2]$ / $3.0$ \\
    Reference-KL coefficient / estimator & $0.001$ / $k_3$ \\
    TIS correction & Token-level upper clip at $C=5$ \\
    Gradient checkpointing / padding removal & Enabled / enabled \\
    KV page size / maximum active requests & 16 tokens / 1,024 per replica \\
    KV4-RL protection & Sink 16 and recent 128 in INT8 \\
    \bottomrule
  \end{tabular}
\end{table}

Policy optimization uses the clipped surrogate in Section~\ref{sec:preliminaries} with reference-KL regularization. For negative advantages, dual clipping bounds the unclipped surrogate using coefficient $3.0$. We retain the same optimization settings across cache-precision configurations. Training microbatches use an automatically profiled token budget with padding removed. Precision arms share the same hardware and maximum-request limit. For each arm, the allocator first reserves model state, execution workspace, quantization metadata, and any protected side-cache storage, then converts the remaining HBM capacity into pages of the corresponding KV format. Request admission requires both an available request slot and sufficient free KV pages. Lower-precision caches can therefore retain more context and sustain more active requests before page capacity becomes binding, even though all arms share the same limit of 1,024 active requests per replica.

The workload and quantization-evolution results use one BF16-trained lineage per model. The 4B checkpoints are iterations 100/300/500 and the 8B checkpoints are iterations 100/250/400. Continued 8B runs retain the same training lineage. Each checkpoint comparison holds policy parameters fixed. The trace manifests record PyTorch 2.10.0 with CUDA 12.8. Kernel attribution for the measured precision configurations is given in Appendix~\ref{app:decode-replay}.

\subsection{Policy Evaluation}
\label{app:evaluation}

We evaluate all trained policies with BF16 weights and BF16 KV caches in a common evaluator. This removes the test-time quantized sampler from the comparison of learned policies. MATH-500~\citep{math5002024} is a 500-problem subset of MATH~\citep{math2021}. We also evaluate OlympiadBench~\citep{olympiadbench2024} and MinervaMath~\citep{minerva2022}. The competition sets comprise AIME 2024/2025~\citep{aime2024,aime2025} and AMC 2023~\citep{amc2023}. Table~\ref{tab:app-evaluation} records the exact dataset distributions and sample counts.

Sampled evaluation uses temperature $0.6$ and top-$p=0.95$ with a 16,384-token response cap. Evaluation prompts allow up to 2,048 tokens and use the same boxed-answer instruction. Periodic evaluation covers MATH-500 and both AIME sets. Endpoint evaluation additionally covers the remaining three datasets.

\begin{table}[htbp]
  \centering
  \small
  \caption{Evaluation tasks. Every task is included at the endpoint. OlympiadBench refers to the English text-only open-ended mathematics competition subset. MinervaMath is the OCWCourses university-level STEM set.}
  \label{tab:app-evaluation}
  \begin{tabular}{@{}lrrll@{}}
    \toprule
    Task & Questions & $k$ & Periodic & Dataset source \\
    \midrule
    MATH-500 & 500 & 8 & Yes & \texttt{HuggingFaceH4/MATH-500} \\
    AIME 2024 & 30 & 32 & Yes & \texttt{Maxwell-Jia/AIME\_2024} \\
    AIME 2025 & 30 & 32 & Yes & \texttt{yentinglin/aime\_2025} \\
    AMC 2023 & 40 & 32 & No & \texttt{math-ai/amc23} \\
    OlympiadBench & 674 & 8 & No & \texttt{math-ai/olympiadbench} \\
    MinervaMath & 272 & 8 & No & \texttt{math-ai/minervamath} \\
    \bottomrule
  \end{tabular}
\end{table}

Let $c_i$ count correct answers among the $k$ samples for question $i$. For $n$ questions, we report $\mathrm{avg@}k=(nk)^{-1}\sum_{i=1}^{n}c_i$ and $\mathrm{pass@}k=n^{-1}\sum_{i=1}^{n}\mathbf1\{c_i>0\}$. Thus avg@$k$ averages individual sampled answers and pass@$k$ records whether at least one answer succeeds. An aggregate across the six tasks gives each task equal weight. Error bars across evaluation seeds use the sample standard deviation of per-seed scores at a fixed checkpoint.

The evaluator uses Math-Verify for symbolic equivalence. OlympiadBench scoring preserves multiple-answer metadata and requires every answer. Numeric answers use the supplied absolute tolerance or $10^{-8}$ by default. MinervaMath numeric answers use relative tolerance $10^{-4}$ in the units requested by the question. These are the implemented grading protocols for the listed dataset distributions. Standalone repeated evaluation uses TP1, whereas periodic evaluation uses the training run's BF16 evaluation layout. Comparisons retain the same evaluation protocol across precision arms.

The final checkpoints are iteration 600 for 4B and iteration 400 for 8B. Both comparisons use evaluation seeds 42/43/44. Each model uses the same evaluation sources across rollout configurations. The Mean column first averages all six tasks within each seed and then computes the mean and sample standard deviation across seeds.

\subsection{Workload Traces}
\label{app:trace-protocol}

Both schedulers use four H200 GPUs with two TP2 replicas. The response quota and sampling settings match Table~\ref{tab:app-training}. The policy remains frozen, including across simulated update boundaries, so collection dynamics can be measured without concurrent changes in the learned policy. We pin the BF16 pool to 83,022 pages per GPU for 4B and 79,639 pages for 8B. Both models have $\kappa=73{,}728$ nominal BF16 KV bytes per context position per GPU under TP2. These pools hold 1,328,352 and 1,274,224 positions respectively. The maximum active batch is 1,024 per replica for either scheduler.

We discard three collection rounds to form the carry pool before measurement. The 4B summaries use the following twelve rounds. The 8B summaries use the following six rounds at every checkpoint. The early 8B run contains twelve measured rounds, of which the first six form the matched comparison. Sampling uses seed 6101. Statistics combine pure-decode calls from both replicas and exclude prefill calls. Active batch and nominal resident volume are averaged over calls. Per-engine decode throughput is the total committed decode tokens divided by summed CUDA-event decode time. It therefore excludes time spent on prefill and collection outside these calls.

The resident-volume calculation is $V_t=\kappa\sum_{i\in\mathcal R_t}\ell_i(t)$ for the active sequences. It counts the nominal KV representation of their contexts and differs from total reserved VRAM or physically measured HBM traffic. The carried contribution is the context belonging to requests resumed from an earlier collection round. Fresh requests supply the remaining contribution. Slots may be reused or compacted without changing this classification.

Fig.~\ref{fig:concentration} displays 17,000 decode calls from one 4B-middle replica. Its interval contains a complete synchronous round and the start of the next one. The first displayed synchronous and asynchronous collection intervals contain 16,379 and 1,759 calls respectively. Display smoothing averages at most 65 calls within a round and samples the resulting curve every eight calls. Batch and context sums are smoothed together so that their product remains the displayed resident volume. Means and reported ratios use unsmoothed calls across the full observation population. The displayed brackets therefore describe particular rounds rather than the average reduction in decode calls.

\subsection{Roofline Comparison and Decode Replay}
\label{app:roofline}
\label{app:decode-replay}

The workload comparison in Fig.~\ref{fig:rollout-quantization}(a) uses nominal BF16 weight bytes and linear-layer FLOPs alongside the traced KV volume. For Qwen3-4B with TP2, the linear projections and output head contribute $W=4.022272$ GB of weights per GPU and $F=4.022272$ GFLOPs per active token per GPU. We compare the demands $W/\mathcal B$, $FB_t/\mathcal F$ and $V_t/\mathcal B$ using peak bandwidth $\mathcal B=4.8$ TB/s and dense BF16 throughput $\mathcal F=989.5$ TFLOPs/s.\footnote{H200 specifications: \url{https://www.nvidia.com/en-us/data-center/h200/}. The dense throughput is half the sparsity-enabled figure.} The aligned axes expose relative demands. They omit kernel inefficiency and communication, which are included in the subsequent execution measurements.

For each checkpoint, we divide its asynchronous trace into five equal-baseline-time phases and select a representative batch and context shape from each phase. Each shape uses uniform context length across its batch. Table~\ref{tab:app-replay-shapes} records the tested shapes. The comparison holds them fixed across precision configurations. It therefore measures execution speedup at fixed concurrency rather than the additional admission enabled by lower cache memory.

\begin{table}[htbp]
  \centering
  \small
  \caption{Representative replay shapes $(B,\ell)$ in generation-phase order. Context lengths are in tokens. The five phases have equal weight in baseline decode time. The first 8B-late context was reduced from the sampled 3,463 to 3,461 tokens to fit the pinned page budget.}
  \label{tab:app-replay-shapes}
  \setlength{\tabcolsep}{3pt}
  \begin{tabular}{@{}lrrrrr@{}}
    \toprule
    Checkpoint & Phase 1 & Phase 2 & Phase 3 & Phase 4 & Phase 5 \\
    \midrule
    4B--100 & 608/621 & 576/798 & 448/1,033 & 304/1,384 & 168/1,963 \\
    4B--300 & 640/874 & 576/1,170 & 464/1,506 & 336/1,968 & 200/2,627 \\
    4B--500 & 640/1,463 & 544/1,895 & 400/2,568 & 272/3,518 & 176/4,767 \\
    8B--100 & 624/547 & 592/676 & 496/879 & 352/1,134 & 192/1,529 \\
    8B--250 & 656/1,863 & 384/3,271 & 320/3,908 & 272/4,435 & 208/5,305 \\
    8B--400 & 368/3,461 & 256/4,712 & 224/5,529 & 200/6,196 & 176/6,972 \\
    \bottomrule
  \end{tabular}
\end{table}

\paragraph{Kernel configurations.}
W4A16 uses vLLM 0.18.0's Machete kernel at the tested batch sizes. FP8 W8A8 uses our schedule-tuned adaptation of vLLM's CUTLASS FP8 GEMM. A separate W8A8 configuration uses rotated INT8 activations. Integer KV8 and KV4 use our CUDA paged-attention kernels with BF16 linear layers. We also measure FlashInfer FA2 attention over an FP8 KV cache within the same engine. This last configuration tests a different cache format and attention path, rather than a separate vLLM end-to-end system. The integer KV4 path enables the 8-bit PV operand described in Appendix~\ref{app:kv4-implementation}.

\paragraph{Timing and aggregation.}
After warmup, we collect 90 CUDA-event samples per state in three within-process blocks. Timing covers the captured model forward with TP communication and the captured argmax operation. Conversion work required by the selected numerical path is included. Scheduler work and stochastic sampling outside the graph are excluded together with prefill and learning. If $\tau_{q,j}$ denotes mean latency for configuration $q$ at state $j$, the aggregate is
\begin{equation}
  S_q=\left(\sum_{j=1}^{5}\omega_j
            \frac{\tau_{q,j}}{\tau_{\mathrm{BF16},j}}\right)^{-1},
  \qquad \omega_j=\frac15.
  \label{eq:app-speedup}
\end{equation}
This is a baseline-time-weighted harmonic mean of state speedups. Replacing state means with medians changes every 4B aggregate by less than $0.006\times$ and every 8B aggregate by less than $0.002\times$. Repeated BF16 middle-checkpoint runs agree within $0.3\%$ after aggregation. These repeats assess timing consistency within the replay protocol.

\subsection{Shared-Prefix Fidelity and Protection Timing}
\label{app:fidelity-protocol}

For each fidelity checkpoint, we generate one BF16 response to each of 32 fixed prompts using temperature one. Eight positions span each response at generated-prefix indices $\operatorname{round}(j(L-1)/7)$ for $j=0,\ldots,7$. The resulting corpus has 256 states per checkpoint. We force every sampler along these same token histories and capture full-vocabulary logits at the selected positions. Both the BF16 reference and the quantized configurations run in the same TP2 engine. CPU analysis applies float64 log-softmax to the recorded logits before temperature scaling or sampling truncation and computes $D_{\mathrm{KL}}(p_{\mathrm{BF16}}\|q)$.

The reported mean gives each state equal weight. Confidence intervals use 10,000 bootstrap samples of the 32 response paths, retaining all eight positions of a selected path together. An independent BF16 process repeat at 4B--300 produces bit-identical reference logits and zero KL. This control measures the repeatability of the comparison within the same engine. The conditional KL remains a sampler diagnostic, while training measures the effect on learned performance.

Protection timing uses heterogeneous contexts to retain the variation encountered by the attention kernel. At 4B--300, we reconstruct 24 batch-context vectors from the midpoint of each measured round across both replicas. They contain 407--475 active responses with mean contexts of approximately 1.4k--1.7k tokens. At 8B, we construct 11 heterogeneous vectors from the mean and quantiles recorded in the iteration-250 traces. These are synthetic vectors because preemption prevents exact reconstruction from contiguous residency records. They contain 260--374 responses with mean contexts of approximately 3.3k--4.7k tokens. The 8B protection kernels and fidelity probe use the iteration-200 checkpoint. Thus its workload shapes and policy parameters come from different recorded iterations.

Each protection arm replays the same vectors as its BF16 reference with ten warmup calls and 90 timed calls per state. We use the median latency for each state and aggregate as $S_q^{\mathrm{ragged}}=\sum_j\tau_{\mathrm{BF16},j}/\sum_j\tau_{q,j}$. This weights states by their BF16 replay duration. These heterogeneous-state measurements have a different state population and aggregation from Eq.~\ref{eq:app-speedup}. Comparisons of protection overhead use the KV4 reference measured in this same heterogeneous-state suite.

\subsection{Rollout and Training Time}
\label{app:timing-boundaries}

GPU decode timing isolates execution once a batch is ready. Rollout collection time additionally includes fresh and carried-prefix prefill along with scheduling and output collection. In the training logs, the rollout timer also encloses reward evaluation performed by the collector. The iteration timer starts before switching to the rollout engine and ends after optimization. It includes parameter synchronization and both phase transitions. Logged training timers take the maximum across ranks. Evaluation and checkpoint saving occur outside that timer. Table~\ref{tab:app-timers} summarizes these boundaries.

Decode replay fixes the active batch and context shapes across precision configurations, so it measures only the execution benefit at a matched resident workload. End-to-end collection instead retains each configuration's precision-specific KV pool and the resulting scheduler decisions. Its timing therefore includes both the reduction in per-forward execution cost and any change in admitted concurrency caused by the cache-capacity constraint. Because each learned policy also produces its own responses, response-length trajectories are reported alongside these timings to distinguish capacity and execution gains from changes in generated work.

\begin{table}[htbp]
  \centering
  \small
  \caption{Timing scopes. Quantization can change forward cost, feasible resident concurrency, and the amount of work completed in a collection round. These measurements therefore answer different performance questions.}
  \label{tab:app-timers}
  \begin{tabular}{@{}p{0.25\linewidth}p{0.69\linewidth}@{}}
    \toprule
    Measurement & Included work \\
    \midrule
    Decode replay & Captured forward, TP communication, and argmax at fixed states \\
    Decode throughput & Committed tokens per summed GPU decode-service time \\
    Rollout collection & Collection including prefill/re-prefill and reward scoring \\
    Training iteration & Collection and learning with weight synchronization and phase switches \\
    \bottomrule
  \end{tabular}
\end{table}

Section~\ref{sec:5} sums the iteration and rollout timers along the same checkpoint lineage used for policy evaluation. The 8B comparison contains exactly 400 updates per configuration. If a run restarts from a saved checkpoint, only the branch leading to the evaluated policy contributes to the total. Initialization and restart downtime are excluded together with evaluation and saving. These totals measure cumulative execution of the retained training iterations.

The reported speedup divides the sum of BF16 times by the corresponding sum for each sampler. It is not the mean of per-iteration ratios. The 8B full-run comparisons use iterations 1--400. Its early interval contains iterations 1--100 and the final evaluation interval contains 351--400. Response-length statistics use the same populations. Actual policies generate their own responses, so these end-to-end comparisons complement the fixed-workload replay rather than replacing it.

The iteration breakdown groups the logged rollout and policy-update timers with the sum of policy and reference log-probability timers. The remaining difference from the total iteration timer is assigned to other work. This residual includes phase transitions and work outside the three named groups. Component smoothing uses a trailing 20-iteration arithmetic mean and preserves their sum. Reported time shares divide unsmoothed component sums by the unsmoothed total. They describe the change in time allocation after quantization rather than a monotonic decrease in rollout share over training.

\section{Analysis of KV Concentration}
\label{app:kv-concentration}

This appendix proves Proposition~\ref{prop:kv-concentration} from a renewal-reward argument and extends the analysis to variable prompt lengths and finite collection. We first state the required renewal-theoretic result and include its proof for completeness. We then instantiate its cycle length and reward for continuous-refill and synchronous rollout. Throughout the appendix, time is measured in decode steps. Prefill, carried-prefix reconstruction, and policy updates lie outside this clock.

\subsection{Renewal-Reward Preliminaries}
\label{app:identity}

\begin{definition}[Renewal-reward process]
Let $\{(X_n,R_n)\}_{n\geq1}$ be an i.i.d. sequence of random pairs, where $X_n\in\mathbb N_{>0}$ is the duration of cycle $n$ and $R_n\in\mathbb R$ is its reward. The reward $R_n$ may depend on the duration $X_n$. Define the renewal epochs $S_0=0$ and $S_n=\sum_{j=1}^n X_j$, and let
$N(t)=\max\{n\geq0:S_n\leq t\}$
be the number of cycles completed by time $t$. The cumulative reward from completed cycles is
$Y(t)=\sum_{n=1}^{N(t)}R_n$.
\end{definition}

The following is the almost-sure part of the renewal-reward theorem stated by \citet[Theorem~3.6.1]{ross1996stochastic}. We include the argument to make its use in our setting explicit.

\begin{theorem}[Renewal-reward theorem]
\label{thm:renewal-reward}
Suppose that $0<\mathbb E[X_1]<\infty$ and $\mathbb E[|R_1|]<\infty$. Then
\begin{equation}
    \lim_{t\to\infty}\frac{Y(t)}{t}
    =
    \frac{\mathbb E[R_1]}{\mathbb E[X_1]}
    \qquad\text{almost surely}.
    \label{eq:app-renewal-reward}
\end{equation}
No independence between $X_n$ and $R_n$ within a cycle is required.
\end{theorem}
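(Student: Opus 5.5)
We prove the renewal-reward theorem by writing $Y(t)/t$ as a ratio of two quantities that each obey a strong law, and treating the random index $N(t)$ through a sandwich argument. Since $X_n\geq1$ and $\mathbb E[X_1]<\infty$, the strong law of large numbers (SLLN) applied to the i.i.d. durations gives $S_n/n\to\mathbb E[X_1]$ almost surely. Applied separately to the i.i.d. rewards, using $\mathbb E[|R_1|]<\infty$, it gives $n^{-1}\sum_{j=1}^n R_j\to\mathbb E[R_1]$ almost surely. Neither application needs any independence between $X_n$ and $R_n$. Only the i.i.d. structure across cycles is used, and each marginal sequence is i.i.d. on its own, which is why the theorem allows $R_n$ to depend on $X_n$.

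The first step is to show that $N(t)\to\infty$ and $N(t)/t\to1/\mathbb E[X_1]$ almost surely. Every $X_j$ is finite, so every $S_n$ is finite, and hence $N(t)\to\infty$ as $t\to\infty$. By the definition of $N(t)$ we have the sandwich
\[
S_{N(t)}\leq t<S_{N(t)+1}.
\]
Dividing by $N(t)$ gives
\[
\frac{S_{N(t)}}{N(t)}\leq\frac{t}{N(t)}<\frac{S_{N(t)+1}}{N(t)+1}\cdot\frac{N(t)+1}{N(t)}.
\]
Because $N(t)\to\infty$ along almost every path, both outer ratios converge to $\mathbb E[X_1]$ by the SLLN for $S_n/n$. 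Therefore $t/N(t)\to\mathbb E[X_1]$, and since $\mathbb E[X_1]>0$ we may invert to get $N(t)/t\to1/\mathbb E[X_1]$.

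The second step is to factor
\[
\frac{Y(t)}{t}=\frac{Y(t)}{N(t)}\cdot\frac{N(t)}{t}.
\]
The first factor equals $n^{-1}\sum_{j=1}^n R_j$ evaluated at the random index $n=N(t)$. On the almost-sure event where this sample mean converges and $N(t)\to\infty$, the first factor converges to $\mathbb E[R_1]$. Multiplying the two limits yields $\mathbb E[R_1]/\mathbb E[X_1]$.

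The only step that needs care is this substitution of the random index $N(t)$ into an almost-sure limit. I will handle it pathwise. First fix the probability-one event $\Omega_0$ on which both strong laws hold; it is an intersection of two probability-one events, so it has probability one. On $\Omega_0$ the map $t\mapsto N(t)$ is a deterministic nondecreasing integer sequence tending to infinity, so convergence of $n^{-1}\sum_{j\leq n}R_j$ as $n\to\infty$ transfers directly to convergence along $n=N(t)$. A small technical point is that $N(t)=0$ for $t<X_1$, making $Y(t)/N(t)$ undefined. This is harmless because only large $t$ matter, and on $\Omega_0$ we have $N(t)\geq1$ for all sufficiently large $t$.
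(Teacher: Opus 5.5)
Your proposal is correct and follows essentially the same route as the paper's proof: the strong law for $S_n/n$, the sandwich $S_{N(t)}\leq t<S_{N(t)+1}$ to obtain $N(t)/t\to1/\mathbb E[X_1]$, and the factorization $Y(t)/t=(Y(t)/N(t))(N(t)/t)$ with a second application of the strong law to the rewards. Your pathwise handling of the random index on the common probability-one event, and your remark that $N(t)=0$ for small $t$, spell out details the paper leaves implicit.
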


\begin{proof}
By the strong law of large numbers, $S_n/n\to\mathbb E[X_1]$ almost surely. Since $S_{N(t)}\leq t<S_{N(t)+1}$ and $N(t)\to\infty$, it follows that $N(t)/t\to1/\mathbb E[X_1]$ almost surely. Applying the strong law again to the rewards gives
\[
    \frac{Y(t)}{t}
    =
    \frac{\sum_{n=1}^{N(t)}R_n}{N(t)}
    \frac{N(t)}{t}
    \longrightarrow
    \mathbb E[R_1]\,
    \frac{1}{\mathbb E[X_1]},
\]
which proves Eq.~\ref{eq:app-renewal-reward}.
\end{proof}

The theorem counts rewards from completed cycles. In the rollout application, an observation may end while a response or synchronous wave is still active. The following lemma shows that this terminal contribution does not affect the long-run average.

\begin{lemma}[Negligible terminal cycle]
\label{lem:terminal-cycle}
Under the assumptions of Theorem~\ref{thm:renewal-reward}, suppose that the absolute reward accumulated during cycle $n$ is bounded by a nonnegative random variable $Z_n$, where $\{(X_n,Z_n)\}_{n\geq1}$ is i.i.d. and $\mathbb E[Z_1]<\infty$. Let $\widetilde Y(t)$ include the reward accrued in the possibly incomplete cycle at time $t$. Then
\[
    \frac{\widetilde Y(t)-Y(t)}{t}
    \longrightarrow 0
    \qquad\text{almost surely}.
\]
Consequently, $\widetilde Y(t)/t$ has the same limit as Eq.~\ref{eq:app-renewal-reward}.
\end{lemma}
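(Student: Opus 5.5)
We need to prove Lemma~\ref{lem:terminal-cycle}. We reduce the claim to showing that the terminal cycle's reward, divided by $t$, vanishes almost surely.

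\textbf{Step 1: bound the difference.} At time $t$ the cycles $1,\ldots,N(t)$ are complete, and the possibly incomplete cycle is cycle $N(t)+1$. By definition, $\widetilde Y(t)-Y(t)$ is the reward accrued so far during that cycle. By hypothesis its absolute value is at most $Z_{N(t)+1}$, so
\[
\frac{|\widetilde Y(t)-Y(t)|}{t}\le \frac{Z_{N(t)+1}}{t}=\frac{Z_{N(t)+1}}{N(t)+1}\cdot\frac{N(t)+1}{t}.
\]

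\textbf{Step 2: the second factor is bounded.} The proof of Theorem~\ref{thm:renewal-reward} established that $N(t)/t\to 1/\mathbb E[X_1]<\infty$ almost surely. Since $1/t\to0$ as well, $(N(t)+1)/t$ also converges to $1/\mathbb E[X_1]$, and in particular it is eventually bounded. Note that $\mathbb E[X_1]\ge1$, because $X_1$ takes positive integer values.

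\textbf{Step 3: the first factor vanishes.} This is the main point: we need $Z_n/n\to0$ almost surely. Because the $Z_n$ are i.i.d. with $\mathbb E[Z_1]<\infty$, the strong law of large numbers applied to the partial sums $\sum_{k\le n} Z_k$ gives
\[
\frac{Z_n}{n}=\frac{\sum_{k\le n}Z_k}{n}-\frac{n-1}{n}\cdot\frac{\sum_{k\le n-1}Z_k}{n-1}\longrightarrow \mathbb E[Z_1]-\mathbb E[Z_1]=0 .
\]
The same conclusion also follows from Borel--Cantelli. For each $\varepsilon>0$ we have $\sum_n \Pr(Z_1>\varepsilon n)\le \mathbb E[Z_1]/\varepsilon<\infty$, so only finitely many of the events $\{Z_n>\varepsilon n\}$ occur almost surely.

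Now $N(t)\to\infty$ almost surely, since every $X_n$ is finite. Composing the deterministic-index limit with this random index therefore gives $Z_{N(t)+1}/(N(t)+1)\to0$ almost surely. Only Step~3 needs care; it is a standard fact, and no independence between $X_n$ and $Z_n$ is required.

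\textbf{Step 4: conclude.} Multiplying the limits from Steps~2 and~3 shows $(\widetilde Y(t)-Y(t))/t\to0$ almost surely. Writing $\widetilde Y(t)/t=Y(t)/t+(\widetilde Y(t)-Y(t))/t$ and applying Theorem~\ref{thm:renewal-reward} to the first term, we conclude that $\widetilde Y(t)/t\to \mathbb E[R_1]/\mathbb E[X_1]$ almost surely.
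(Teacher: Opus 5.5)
Your proof is correct and follows the paper's argument. Both use the bound $|\widetilde Y(t)-Y(t)|/t\le \frac{Z_{N(t)+1}}{N(t)+1}\cdot\frac{N(t)+1}{t}$, show $Z_n/n\to0$ almost surely via Borel--Cantelli (your second justification; your SLLN-differencing route also works), and use $(N(t)+1)/t\to1/\mathbb E[X_1]$. Your explicit random-index step through $N(t)\to\infty$ is a detail the paper leaves implicit.
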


\begin{proof}
For every $\epsilon>0$, integrability of $Z_1$ implies
$\sum_{n\geq1}\Pr(Z_n>\epsilon n)<\infty$.
The Borel--Cantelli lemma therefore gives $Z_n/n\to0$ almost surely. Since the incomplete-cycle reward is bounded by $Z_{N(t)+1}$,
\[
    \frac{|\widetilde Y(t)-Y(t)|}{t}
    \leq
    \frac{Z_{N(t)+1}}{N(t)+1}
    \frac{N(t)+1}{t}
    \longrightarrow 0,
\]
where the first factor converges to zero and the second converges to $1/\mathbb E[X_1]$.
\end{proof}

\subsection{Continuous Refill and Length-Biased Occupancy}
\label{app:continuous-refill}

A response with prompt length $P$ and generation length $L$ visits cached contexts of lengths $P,P+1,\ldots,P+L-1$. Its total logical KV work is
$Q(P,L)=PL+L(L-1)/2$.
For fixed $P$, let $\mu=\mathbb E[L]$ and
$q=\mathbb E[Q(P,L)]
=P\mu+\mathbb E[L(L-1)]/2$.
The assumption $\mathbb E[L^2]<\infty$ ensures $q<\infty$. This reward counts the context positions logically referenced during decoding. Physical memory traffic additionally depends on the cache representation and attention implementation.

Consider one continuously refilled slot. Each completed response forms a renewal cycle with duration $X_n=L_n$ and reward $R_n=Q(P,L_n)$. Theorem~\ref{thm:renewal-reward} therefore gives a completed-cycle reward rate of $q/\mu$. The context processed by an incomplete response is bounded by its total logical work $Q(P,L)$, which has finite expectation, so Lemma~\ref{lem:terminal-cycle} shows that including the active response does not change the limit. Hence the long-run context processed per decode step in one slot is $q/\mu$.

Because continuous refill maintains $B$ occupied slots, the same argument applies to each slot, and the finite sum of their almost-sure limits gives
\begin{equation}
    \overline\ell_{\mathrm{async}}
    =
    \frac{q}{\mu}
    =
    P+\frac{\mathbb E[L(L-1)]}{2\mathbb E[L]},
    \qquad
    \overline V_{\mathrm{async}}
    =
    \frac{\kappa Bq}{\mathbb E[L]}.
    \label{eq:app-refill}
\end{equation}

The same renewal-reward construction also derives the length-biased distribution observed at decode steps. Fix a response length $l$ and an age $a\in\{0,\ldots,l-1\}$. Assign one unit of reward to a cycle if it has length $l$ and visits age $a$, and zero reward otherwise. Each response of length $l$ visits age $a$ exactly once, so the expected reward per cycle is $\Pr(L=l)$. Theorem~\ref{thm:renewal-reward} gives the long-run fraction of slot-steps with response length $l$ and age $a$ as $\Pr(L=l)/\mu$. Summing over the $l$ possible ages and then conditioning on $L^\ast=l$ yields
\begin{equation}
    \Pr(L^\ast=l)
    =
    \frac{l\Pr(L=l)}{\mu},
    \qquad
    \Pr(g=a\mid L^\ast=l)
    =
    \frac{1}{l},
    \quad 0\leq a<l.
    \label{eq:app-inspection}
\end{equation}
Thus, longer responses are observed more frequently because they occupy a slot for more decode steps, an instance of inspection-paradox length bias~\citep{ross1996stochastic}. Averaging the age under Eq.~\ref{eq:app-inspection} gives
$\mathbb E[g]=\mathbb E[L(L-1)]/(2\mu)$,
which recovers the first identity in Eq.~\ref{eq:app-refill}. Equivalently, if $\operatorname{Var}(L)=\sigma^2$, then
$\overline\ell_{\mathrm{async}}
=P+(\mu-1)/2+\sigma^2/(2\mu)$.

\subsection{Synchronous Waves and Proof of Proposition~\ref{prop:kv-concentration}}
\label{app:proof}

A synchronous wave contains $B$ independently sampled responses and lasts
$M_B=\max_{1\leq i\leq B}L_i$
decode steps. Treat each wave as a renewal cycle with duration $X_n=M_{B,n}$ and resident-volume reward
$R_n=\kappa\sum_{i=1}^{B}Q(P,L_{n,i})$.
The duration and reward of a wave are generally dependent, but Theorem~\ref{thm:renewal-reward} requires only that the pairs $(X_n,R_n)$ be i.i.d. across waves.

The expected reward of one wave is $\kappa Bq$, while its expected duration is $\mathbb E[M_B]$. Both are finite because $M_B\leq\sum_{i=1}^{B}L_i$ and $q<\infty$. The reward accumulated in a partial terminal wave is bounded by its total wave reward, so Lemma~\ref{lem:terminal-cycle} again applies. Theorem~\ref{thm:renewal-reward} therefore yields
\begin{equation}
    \overline V_{\mathrm{sync}}
    =
    \frac{\kappa Bq}{\mathbb E[M_B]}.
    \label{eq:app-sync-volume}
\end{equation}

The step-averaged synchronous occupancy follows from the same construction. A response of length $L_i$ contributes one occupied slot for each of its $L_i$ decode steps, so the total occupancy reward of a wave is $\sum_{i=1}^{B}L_i$. Its expectation is $B\mathbb E[L]$, giving
\begin{equation}
    \overline B_{\mathrm{sync}}
    =
    \frac{B\mathbb E[L]}{\mathbb E[M_B]}.
    \label{eq:app-sync-occupancy}
\end{equation}
Combining Eq.~\ref{eq:app-refill} with Eq.~\ref{eq:app-sync-volume} establishes the KV concentration factor
\begin{equation}
    \mathcal C_{\mathrm{KV}}
    =
    \frac{\overline V_{\mathrm{async}}}
         {\overline V_{\mathrm{sync}}}
    =
    \frac{\mathbb E[M_B]}{\mathbb E[L]}
    \geq 1.
    \label{eq:app-sync}
\end{equation}

The inequality follows from $M_B\geq L_1$ almost surely. If $B\geq2$ and the response-length distribution is non-degenerate, there exist $a<b$ such that $\Pr(L=a)>0$ and $\Pr(L=b)>0$. The event $\{L_1=a,L_2=b\}$ then has positive probability and implies $M_B>L_1$. Consequently, $\mathbb E[M_B]>\mathbb E[L]$ and $\mathcal C_{\mathrm{KV}}>1$.

For completeness, the step-averaged mean context of a synchronous wave can be obtained by assigning the wave the reward
$\sum_{t=0}^{M_B-1}(P+t)=PM_B+M_B(M_B-1)/2$.
Since $M_B^2\leq\sum_{i=1}^{B}L_i^2$, this reward has finite expectation. A final application of Theorem~\ref{thm:renewal-reward} gives
\[
    \overline\ell_{\mathrm{sync}}
    =
    P+
    \frac{\mathbb E[M_B^2]-\mathbb E[M_B]}
         {2\mathbb E[M_B]}.
\]
Together, these identities prove Proposition~\ref{prop:kv-concentration}.

\subsection{Extensions Beyond Continuous Refill}
\label{app:finite-collection}

When batch occupancy varies, the mean context observed in an occupied slot differs from the unweighted mean of the per-step context curve. For a schedule $s$, define the slot-weighted context and step-averaged occupancy as
\begin{equation}
    \ell_s^{\mathrm{slot}}
    =
    \lim_{D\to\infty}
    \frac{
        \sum_{t<D}\sum_{i\in\mathcal R_t^s}\ell_i(t)
    }{
        \sum_{t<D}B_t^s
    },
    \qquad
    \overline B_s
    =
    \lim_{D\to\infty}
    \frac{1}{D}\sum_{t<D}B_t^s.
    \label{eq:app-slot-context}
\end{equation}
The first quantity samples occupied slot-steps, whereas the main-text quantity $\overline\ell_s$ assigns equal weight to every decode step. They coincide under continuous refill because $B_t=B$, but differ when occupancy varies. In particular, the low-occupancy steps in the tail of a synchronous wave receive greater relative weight in $\overline\ell_s$ than in $\ell_s^{\mathrm{slot}}$.

For an i.i.d. stream of prompt and response-length pairs $(P,L)$ satisfying $\mathbb E[PL]<\infty$ and $\mathbb E[L^2]<\infty$, take the renewal-cycle duration to be $L$ and its logical-work reward to be $Q(P,L)$. Applying Theorem~\ref{thm:renewal-reward} gives
\begin{equation}
    \ell^{\mathrm{slot}}
    =
    \frac{\mathbb E[Q(P,L)]}{\mathbb E[L]}
    =
    \frac{\mathbb E[PL]}{\mathbb E[L]}
    +
    \frac{\mathbb E[L(L-1)]}{2\mathbb E[L]}.
    \label{eq:app-variable-prompt}
\end{equation}
The prompt contribution is weighted by response length and reduces to $\mathbb E[P]$ when $P$ and $L$ have zero covariance. Under continuous refill, Eq.~\ref{eq:app-variable-prompt} is also the mean resident context per active slot.

The same workload identity extends to prefix-preserving schedules that pause and resume unfinished responses. Suppose that at most $H<\infty$ admitted responses remain unfinished at any time and that every admitted response eventually completes. After $n$ admissions, the number of unexecuted tokens is bounded by $H\max_{j\leq n}L_j$, while the remaining logical KV work is bounded by $H\max_{j\leq n}Q(P_j,L_j)$. The argument used in Lemma~\ref{lem:terminal-cycle} implies that both residuals are $o(n)$ almost surely under the stated moment assumptions. The strong law then gives Eq.~\ref{eq:app-variable-prompt} for the executed slot-steps. Whenever $\overline B_s$ exists, the resident-volume average follows from $\overline V_s=\kappa\overline B_s\ell_s^{\mathrm{slot}}$. In our rollout setting, $L$ is bounded by the maximum generation length, so the finite-second-moment assumption in Proposition~\ref{prop:kv-concentration} holds automatically.

Actual quota-based collection differs from the idealized continuous-refill model because it operates with a finite request pool and admits requests subject to KV page availability. Its occupancy can therefore decline within a collection interval rather than remaining fixed at $B$. In our measurements, the synchronous replicas collect 512 responses per round, while both schedulers permit up to 1,024 active requests per replica, so the practical comparison does not exactly instantiate the common-$B$ construction in Proposition~\ref{prop:kv-concentration}. Policy updates can also change the response-length distribution across iterations. The proposition isolates the scheduling mechanism for a fixed workload, while the frozen-checkpoint traces in Fig.~\ref{fig:concentration} measure the resulting KV concentration under finite request pools, page constraints, and the actual quota-based scheduler.
\section{Supplementary Results}
\label{app:supplementary-results}

This appendix gives additional workload traces, 8B quantization replay, protection and storage-format ablations, multi-task learning curves, endpoint pass@$k$ results, 4B training curves, and an iteration-time breakdown. All measurements follow Appendix~\ref{app:experimental-setup}.

\subsection{Workload and Acceleration across Training}
\label{app:phase-results}

Fig.~\ref{fig:app-workload-phases} extends the 4B-middle trace in Fig.~\ref{fig:concentration} to earlier and later checkpoints of both models. The resident-KV ratio closely follows the active-batch ratio at each checkpoint. Their magnitudes vary substantially across training, even though the scheduler configuration remains fixed. Larger resident contexts also increase per-step cost, so decode-throughput gains are smaller than the gains in concurrency.

\begin{figure}[htbp]
  \centering
  \includegraphics[width=0.85\linewidth]{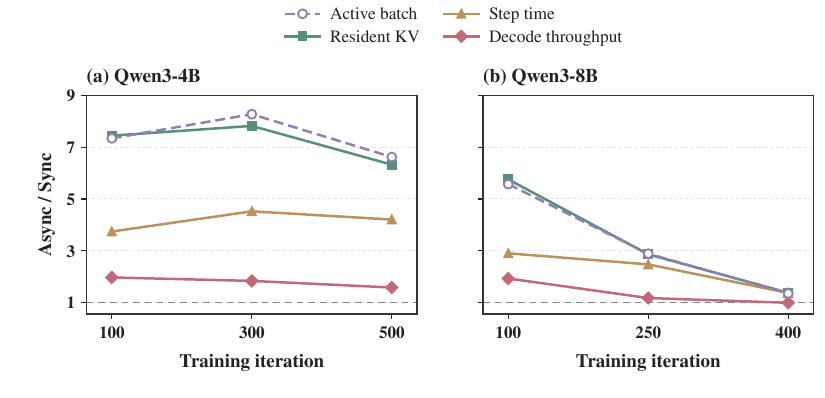}
  \vspace{-5pt}
  \caption{Workload concentration across training for Qwen3-4B-Base and Qwen3-8B-Base. Each point reports the ratio of asynchronous to synchronous collection at a frozen BF16 checkpoint. Active batch size and nominal resident KV volume are averaged over pure-decode calls. Decode-step latency is measured with CUDA events, and decode-service throughput is the number of committed decode tokens divided by summed decode time. Both TP2 replicas are pooled after three warmup rounds, with twelve measured rounds at 4B and six at 8B. The dashed line denotes parity between the two schedulers.}
  \label{fig:app-workload-phases}
  \vspace{-10pt}
\end{figure}

The late 8B checkpoint illustrates the capacity constraint. Its asynchronous engine holds an average of 91.7 GB of active KV per GPU against a nominal pool capacity of 93.9 GB. The synchronous workload is also substantial at 66.8 GB. Asynchronous scheduling therefore provides a smaller concurrency increase than at the early checkpoint, and its decode throughput is $0.98\times$ the synchronous value. The fixed page budget thus limits the benefit of scheduling at this checkpoint. This near-capacity regime also creates a second opportunity for KV compression: a smaller cache can admit more resident context before page exhaustion restricts concurrency. The fixed-workload replay below isolates execution speedup and does not include this admission benefit, whereas the end-to-end training measurements retain the precision-specific pool capacities.

\subsection{Quantization Replay on Qwen3-8B-Base}
\label{app:8b-quantization}

The fixed-workload comparison below extends the 4B results in Section~\ref{sec:3.2} to 8B. KV quantization remains effective even where the additional decode throughput from asynchronous scheduling has become small.

\begin{figure}[htbp]
  \centering
  \includegraphics[width=0.85\linewidth]{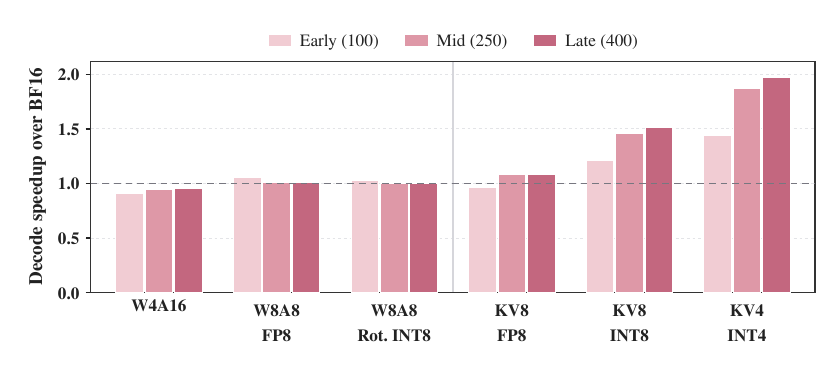}
  \vspace{-5pt}
  \caption{Fixed-workload decode-forward speedup for Qwen3-8B-Base on H200 with TP2. Early, Mid, and Late correspond to iterations 100, 250, and 400. Each result aggregates five matched batch--context shapes using equal BF16-time weights, and the dashed line denotes BF16 parity. W4A16 uses vLLM Machete, FP8 W8A8 uses our adaptation of the vLLM CUTLASS GEMM, FP8 KV8 uses FlashInfer FA2, and integer KV8 and KV4 use our paged-attention kernels. KV4 denotes the unprotected INT4 cache. Appendix~\ref{app:decode-replay} specifies the execution paths and timing boundaries.}
  \label{fig:app-8b-quantization}
  \vspace{-10pt}
\end{figure}

The same pattern holds at 8B (Fig.~\ref{fig:app-8b-quantization}). KV4 provides $1.44\times$ decode speedup at the early checkpoint and $1.97\times$ at the late checkpoint, while FP8 W8A8 provides at most $1.06\times$. KV8 also illustrates why storage precision alone is insufficient to predict performance: the FlashInfer FP8 path gives $0.97$--$1.09\times$ speedup, compared with $1.22$--$1.52\times$ for our integer path. These configurations change both representation and attention arithmetic.

\FloatBarrier
\subsection{Location and Precision of Protection}
\label{app:band}

Fig.~\ref{fig:app-protection} separates the effect of protection location from its execution cost. Panels~(a,c) compare allocations of the same 144 protected positions against KV4. At 4B--300, protecting sink 16 and recent 128 reduces mean KL by $7.0\times$ relative to KV4. It also produces lower mismatch than assigning all 144 positions to either region. The same allocation reduces mean KL by $5.7\times$ at the 8B fidelity checkpoint. Both regions therefore contribute to the default protection in these comparisons.

\begin{figure}[htbp]
  \centering
  \includegraphics[width=0.85\linewidth]{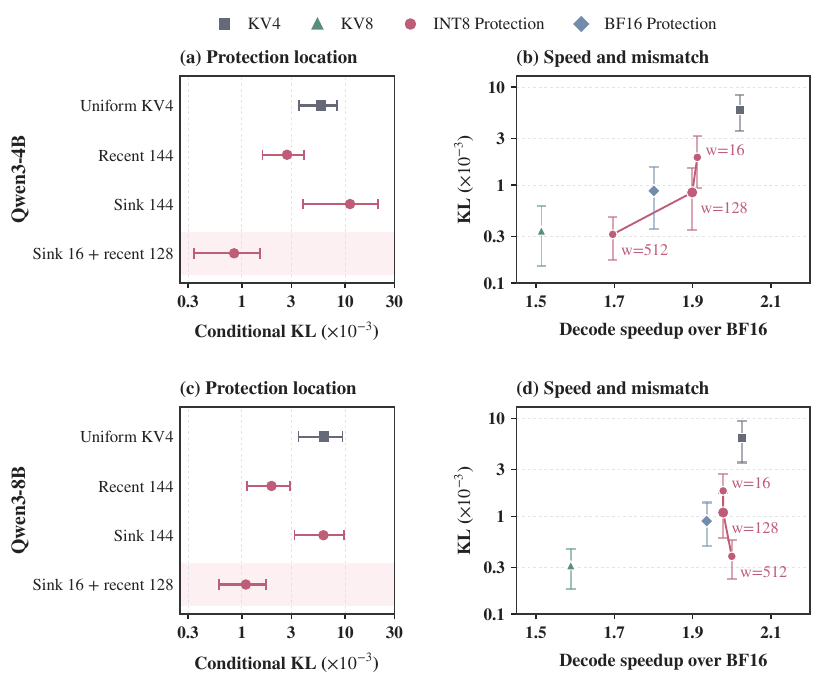}
  \vspace{-5pt}
  \caption{Fidelity and decode cost of selective protection. Panels~(a,c) compare allocations of the same 144 INT8-protected positions at 4B and 8B. Panels~(b,d) report BF16-relative decode speedup against shared-prefix $D_{\mathrm{KL}}(p_{\mathrm{BF16}}|q)$. The connected INT8 series fixes the sink width at 16 and varies the recent width $w$, with the enlarged marker denoting the default $w=128$. BF16 protection uses the same 16/128 positions, while KV8 stores the full cache in INT8. KL axes are logarithmic, and KL error bars are 95\% response-cluster bootstrap intervals over 32 responses. Speedups use matched heterogeneous decode states. The 4B measurements use the iteration-300 policy and traces. The 8B fidelity measurements use the iteration-200 policy, with timing states matched to iteration-250 trace statistics.}
  \label{fig:app-protection}
  \vspace{-10pt}
\end{figure}

The speed--mismatch plots in Fig.~\ref{fig:app-protection}(b,d) show how this reduction relates to execution cost. The default retains approximately $94\%$ of KV4 decode throughput at 4B and $98\%$ at 8B in this suite. BF16 protection is slower than INT8 protection at the same positions. Their KL intervals overlap, with a slightly lower BF16-protection mean at 8B and a slightly higher mean at 4B. This comparison changes protected QK arithmetic along with storage precision. The observed trade-off supports INT8 protection as an efficient choice without requiring exact equivalence to BF16 protection.

The recent width controls how much context receives additional protection. Increasing it to 512 further reduces mismatch in both probes. Its execution cost is larger at 4B, while the 8B timings remain close to those of the default. Conversely, a 16-token recent window retains speed but leaves more mismatch. These results establish a useful protection range and explain the choice of a compact default. The training comparisons in Section~\ref{sec:4} and Section~\ref{sec:5} assess its effect on learning.

\FloatBarrier
\subsection{Quantization Format and Rotation}
\label{app:quantization-design}

We examine the numerical choices underlying the INT4 representation with a controlled storage-quantization probe. It follows the same BF16-generated histories as the deployed-engine fidelity comparison and measures the conditional KL in Eq.~\ref{eq:sampler-kl}. Prompt prefill remains BF16 and the forced continuation uses the quantized cache. The probe reconstructs the stored values and performs attention with BF16 operands. It therefore compares storage formats separately from the INT8 query and probability operands used by the deployed kernel. Its absolute KL values need not equal those in Fig.~\ref{fig:protection}(b,c).

\begin{figure}[htbp]
  \centering
  \includegraphics[width=0.95\linewidth]{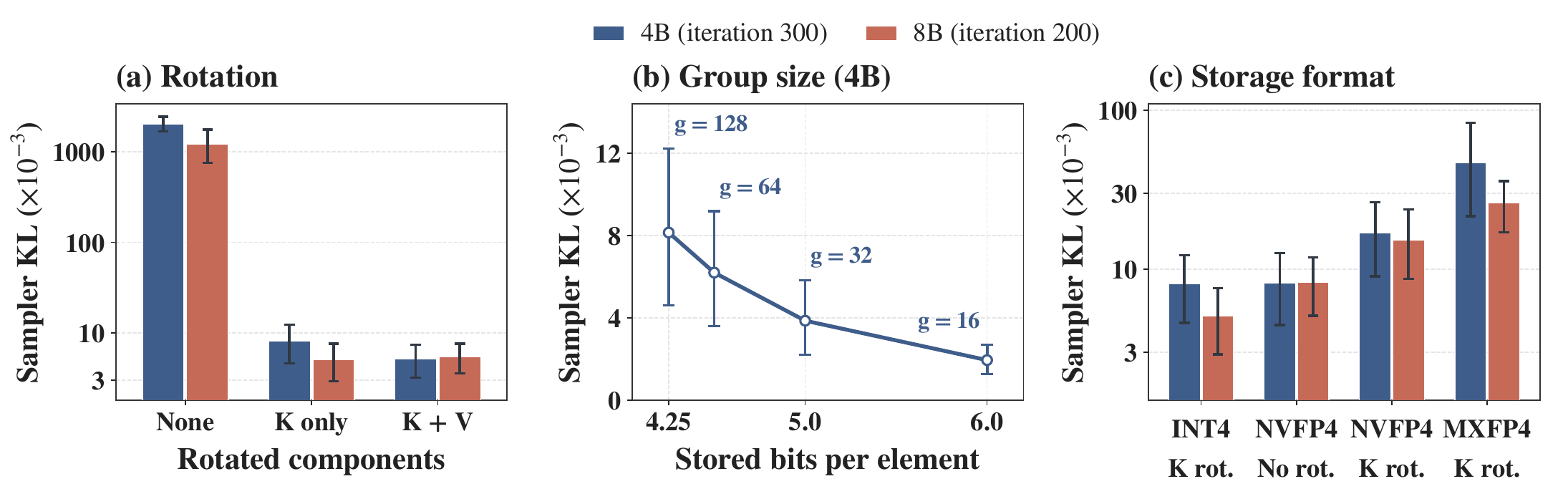}
  \vspace{-5pt}
  \caption{Cache-representation fidelity ablations for the quantized KV representation. (a) Rotation choices for affine INT4 with group size 128. (b) Quantization group size versus storage cost at 4B under K-only rotation. (c) Affine INT4, NVFP4, and MXFP4 representations with the rotation applied to each format indicated below the corresponding result. All probes reconstruct the stored cache and use BF16 attention operands, thereby excluding the integer query and probability arithmetic of the deployed kernels. Storage cost includes codes and metadata. Panels~(a,c) use logarithmic KL axes, and error bars are 95\% response-cluster bootstrap intervals over 32 histories.}
  \label{fig:app-quantization-design}
  \vspace{-10pt}
\end{figure}

Fig.~\ref{fig:app-quantization-design}(a) shows that K rotation is important when a full head shares one affine scale. It sharply reduces mismatch at both checkpoints. Adding V rotation produces a smaller mean at 4B but does not give a consistent improvement across the probed histories and checkpoints. The response-paired geometric KL ratio for K-and-V rotation relative to K-only rotation is $0.89$ with a 95\% interval of $[0.55,1.53]$ at 4B. At 8B--200, the corresponding ratio is $1.29$ with interval $[0.63,3.04]$. These comparisons support retaining K-only rotation in the deployed design.

As shown in Fig.~\ref{fig:app-quantization-design}(b), finer channel groups improve fidelity at a higher storage cost. Reducing the affine group size from 128 to 32 lowers the 4B mean KL from $8.14\times10^{-3}$ to $3.86\times10^{-3}$ while increasing storage from 4.25 to 5.00 bits per element. A further reduction to group size 16 lowers mean KL to $1.94\times10^{-3}$ at 6.00 bits per element. Each group also requires its own scale and offset in the attention reduction. This comparison motivates examining selective protection as another use of additional precision. It does not establish a latency ordering between the deployed protection kernel and a separately optimized small-group kernel.

Fig.~\ref{fig:app-quantization-design}(c) compares INT4 with NVFP4 and MXFP4 representations. The NVFP4 probe uses E2M1 values with 16-element E4M3 scales and an FP32 global scale per token--head vector. This implementation uses 4.75 bits per element, with the global scale accounting for 0.25 bits per element. The MXFP4 probe uses 32-element E8M0 scales and occupies 4.25 bits per element. Without rotation, NVFP4 has a 4B mean KL close to head-grouped INT4 with K rotation. Adding K rotation raises NVFP4 mismatch at both checkpoints, showing that its effect depends on the representation. Thus the choice of INT4 rests on its head-level affine structure and validated integer execution rather than a universal fidelity advantage over FP4. No FP4 attention-kernel latency is inferred from this numerical probe.

\FloatBarrier
\subsection{Learning Trajectories and Endpoint Evaluation}
\label{app:protection-learning}

Fig.~\ref{fig:app-protection-learning} extends the MATH-500 comparison to both AIME benchmarks. Each precision configuration uses one Qwen3-8B-Base training run with seed 43 and ten warmup iterations. Training continues to iteration 400 under the same GRPO and TIS settings. Every checkpoint is evaluated in BF16. Each reported mean combines the evaluation recorded by the training run with standalone evaluations using seeds 42 and 44. The standard deviation summarizes these three evaluations at fixed policy parameters.

\begin{figure}[htbp]
  \centering
  \includegraphics[width=0.95\linewidth]{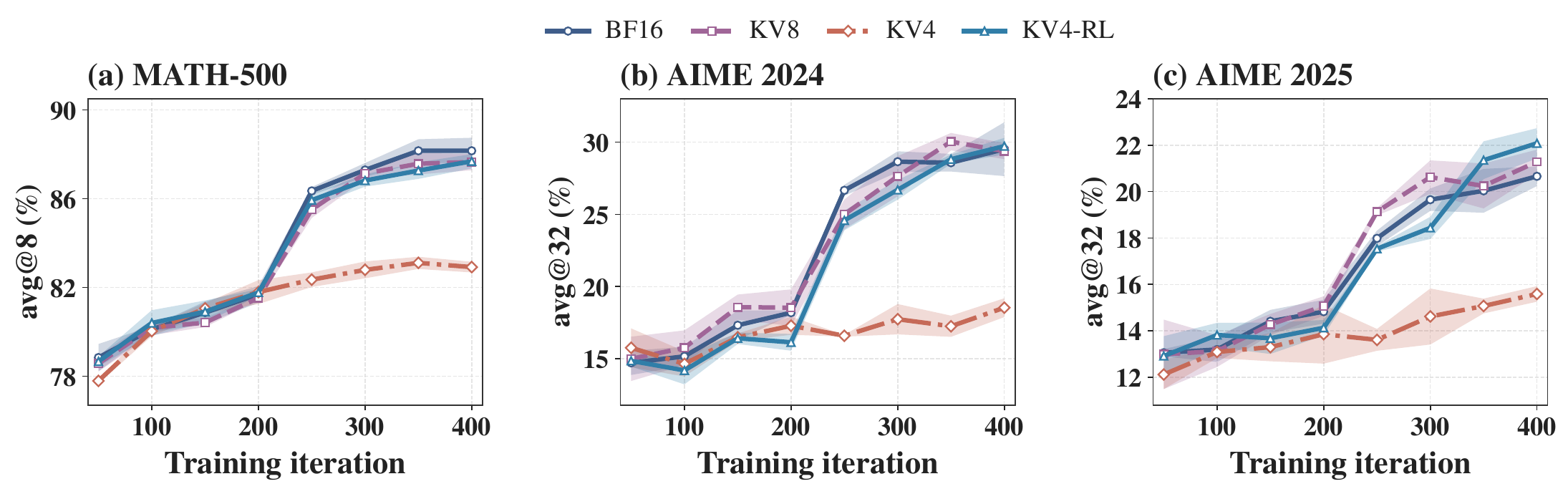}
  \vspace{-5pt}
  \caption{Periodic BF16 evaluation of Qwen3-8B-Base policies trained with four rollout precisions. (a) MATH-500 avg@$8$. (b) AIME 2024 avg@$32$. (c) AIME 2025 avg@$32$. Each rollout precision uses one training run, and every checkpoint is evaluated with seeds 42, 43, and 44. Curves report the mean across these evaluations, with shading denoting one sample standard deviation.}
  \label{fig:app-protection-learning}
  \vspace{-10pt}
\end{figure}

KV4 improves early but falls behind after iteration 200 as BF16 and KV4-RL continue to improve. KV4-RL remains close to BF16 on MATH-500 and AIME 2024 and finishes higher on AIME 2025 in this run. The common BF16 evaluator attributes these differences to the learned policies rather than their test-time cache precision.

Table~\ref{tab:app-protection-endpoints} complements the avg@$k$ results in Table~\ref{tab:final-performance} with pass@$k$ on the same six benchmarks for both model sizes. All scores use the BF16 evaluation protocol in Appendix~\ref{app:evaluation}. Task means and sample standard deviations are computed over three evaluation seeds. For the Mean column, we first average the six tasks within each seed and then compute the mean and standard deviation across seeds. The 8B comparison combines in-training and standalone evaluation, while all three 4B repeats are standalone evaluations. Each model retains the same collection of evaluation sources across rollout precisions.

\begin{table}[htbp]
  \centering
  \caption{Final Qwen3-Base pass@$k$ accuracy (\%) under BF16 evaluation. The 4B and 8B checkpoints are evaluated after 600 and 400 training iterations, respectively. MATH-500, OlympiadBench, and MinervaMath use pass@$8$, while AIME 2024/2025 and AMC 2023 use pass@$32$. Subscripts denote standard deviation over three evaluation seeds. Mean assigns equal weight to the six tasks within each seed before computing the reported mean and standard deviation. All values are rounded to one decimal place.}
  \label{tab:app-protection-endpoints}
  \vspace{3pt}
\begingroup
\small
\setlength{\tabcolsep}{3pt}
\renewcommand{\arraystretch}{1.0}
\resizebox{0.95\linewidth}{!}{%
\begin{tabular}{@{}lcccccccc@{}}
\toprule
& & \multicolumn{3}{c}{pass@$8$} & \multicolumn{3}{c}{pass@$32$} & \\
\cmidrule(lr){3-5}\cmidrule(lr){6-8}
\textbf{Model} & \textbf{Rollout} & \textbf{MATH-500} & \textbf{OlympiadBench} & \textbf{MinervaMath} & \textbf{AIME24} & \textbf{AIME25} & \textbf{AMC23} & \textbf{Mean} \\
\midrule
\multirow{4}{*}{Qwen3-8B-Base} & BF16 & $96.5_{\pm 0.3}$ & $73.8_{\pm 0.5}$ & $70.7_{\pm 2.4}$ & $66.7_{\pm 6.7}$ & $52.2_{\pm 5.1}$ & $97.5_{\pm 0.0}$ & $76.2_{\pm 1.9}$ \\
 & KV8 & $96.5_{\pm 0.1}$ & $73.6_{\pm 0.3}$ & $69.6_{\pm 1.5}$ & $64.4_{\pm 3.8}$ & $55.6_{\pm 1.9}$ & $96.7_{\pm 1.4}$ & $76.1_{\pm 0.6}$ \\
 & KV4 & $92.3_{\pm 0.6}$ & $66.3_{\pm 0.7}$ & $69.0_{\pm 0.8}$ & $56.7_{\pm 3.3}$ & $37.8_{\pm 1.9}$ & $93.3_{\pm 3.8}$ & $69.2_{\pm 0.4}$ \\
\noalign{\vskip 1.5pt}
\cdashline{2-9}[4pt/2pt]
\noalign{\vskip 1.5pt}
 & KV4-RL & $96.2_{\pm 0.2}$ & $74.2_{\pm 0.7}$ & $71.3_{\pm 2.9}$ & $64.4_{\pm 1.9}$ & $57.8_{\pm 1.9}$ & $96.7_{\pm 1.4}$ & $76.8_{\pm 1.1}$ \\
\midrule
\multirow{4}{*}{Qwen3-4B-Base} & BF16 & $92.5_{\pm 0.6}$ & $66.8_{\pm 1.1}$ & $63.8_{\pm 0.2}$ & $45.6_{\pm 3.8}$ & $42.2_{\pm 3.8}$ & $95.0_{\pm 2.5}$ & $67.7_{\pm 1.2}$ \\
 & KV8 & $92.3_{\pm 0.6}$ & $66.9_{\pm 0.6}$ & $65.4_{\pm 1.0}$ & $56.7_{\pm 8.8}$ & $37.8_{\pm 5.1}$ & $95.8_{\pm 3.8}$ & $69.1_{\pm 1.6}$ \\
 & KV4 & $92.0_{\pm 0.4}$ & $65.8_{\pm 1.5}$ & $64.8_{\pm 0.2}$ & $44.4_{\pm 3.8}$ & $41.1_{\pm 7.7}$ & $93.3_{\pm 1.4}$ & $66.9_{\pm 1.8}$ \\
\noalign{\vskip 1.5pt}
\cdashline{2-9}[4pt/2pt]
\noalign{\vskip 1.5pt}
 & KV4-RL & $92.4_{\pm 0.4}$ & $66.6_{\pm 1.0}$ & $65.7_{\pm 0.8}$ & $48.9_{\pm 6.9}$ & $41.1_{\pm 7.7}$ & $95.0_{\pm 2.5}$ & $68.3_{\pm 2.7}$ \\
\bottomrule
\end{tabular}
}
\endgroup

\end{table}

\subsection{Acceleration and Learned Performance on Qwen3-4B-Base}
\label{app:4b-continuations}

\begin{figure}[htbp]
  \centering
  \includegraphics[width=0.95\linewidth]{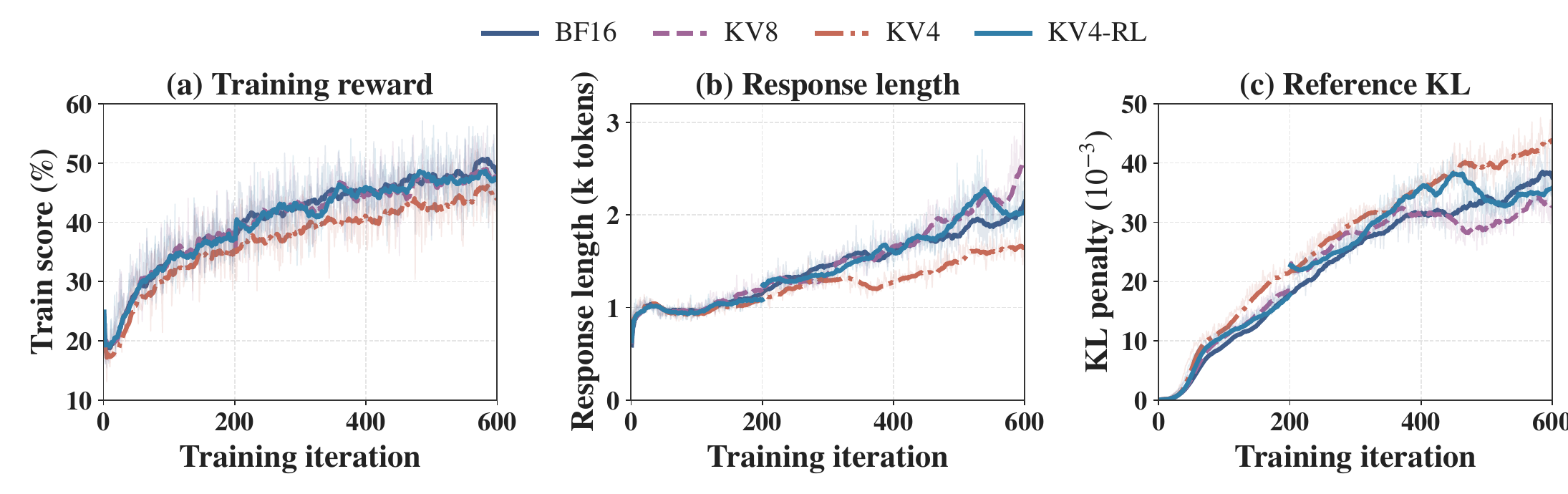}
  \vspace{-5pt}
  \caption{End-to-end timing for Qwen3-4B-Base over 600 training iterations. (a) Rollout collection time per iteration, with 20-iteration means computed separately within each uninterrupted logging segment. (b) Ratios of cumulative BF16 time to cumulative method time for rollout collection and full iterations. Cumulative ratios are computed from summed timers over the same iteration interval. KV4 is omitted from panel~(b) because its generated responses are substantially shorter. Timing excludes initialization, restart downtime, evaluation, and checkpoint saving.}
  \label{fig:app-4b-acceleration}
  \vspace{-10pt}
\end{figure}

Fig.~\ref{fig:app-4b-acceleration} shows that KV4-RL lowers collection time while retaining generation lengths close to BF16. Across the 600 recorded updates, its cumulative rollout time decreases by $1.57\times$ ($8.65\rightarrow5.50$ hours) and its cumulative full-iteration time by $1.17\times$ ($19.80\rightarrow16.87$ hours). The run-wide mean response lengths are 1.45k tokens for BF16 and 1.46k for KV4-RL, providing closely matched generated workloads. The rollout advantage grows to $1.72\times$ over the final 50 iterations, when both configurations average approximately 2.0k tokens per response. This trend is consistent with longer responses increasing both the amount of KV data processed per decode step and occupancy of the finite page pool. Unlike fixed-workload replay, the end-to-end measurement can therefore include both faster execution at a given decode state and additional concurrency enabled by the larger effective KV capacity, although it does not separately identify their contributions.

\begin{figure}[htbp]
  \centering
  \includegraphics[width=0.95\linewidth]{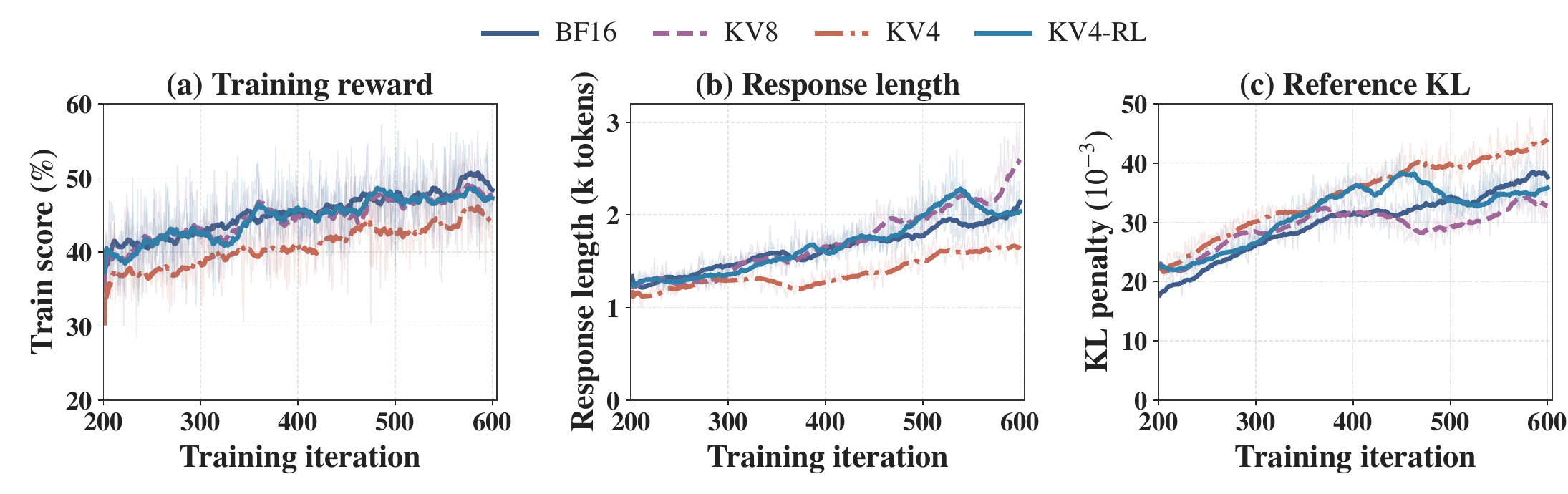}
  \vspace{-5pt}
  \caption{Training trajectories for Qwen3-4B-Base over 600 iterations. (a) Mean correctness reward. (b) Mean completed-response length. (c) Logged reference-KL penalty computed with the $k_3$ estimator against the initial policy. This statistic is distinct from the shared-prefix sampler KL in Eq.~\ref{eq:sampler-kl}. Thick curves show 20-iteration means, and faint curves show the corresponding raw traces.}
  \label{fig:app-4b-training}
  \vspace{-10pt}
\end{figure}

The 4B learning curves in Fig.~\ref{fig:app-4b-training} distinguish execution gains from workload changes. BF16 and KV4-RL show similar response-length growth and increasing reward. KV4 also lengthens its responses but remains shorter, earns lower reward, and has a larger reference-KL penalty. Over the final 100 updates, the reward gap to BF16 is 1.1 percentage points for KV4-RL and 4.6 points for KV4. As at 8B, collection time must therefore be interpreted together with the learned policy and generation workload.

\FloatBarrier
\subsection{Iteration-Time Breakdown}
\label{app:iteration-breakdown}

\begin{figure}[htbp]
  \centering
  \includegraphics[width=0.95\linewidth]{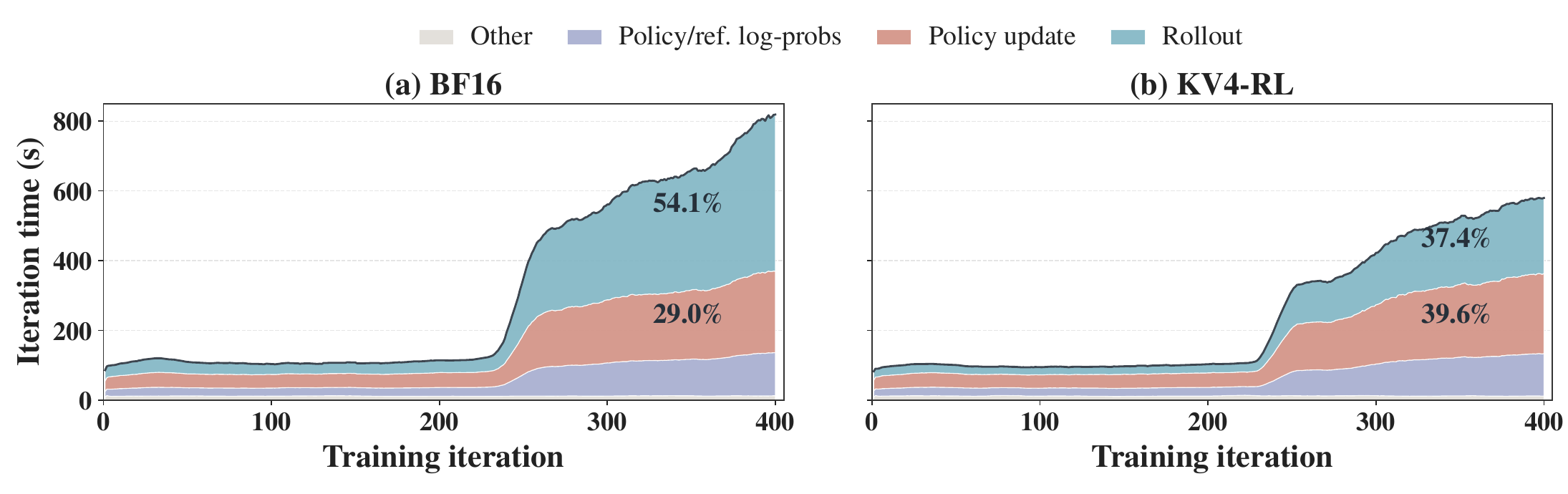}
  \vspace{-5pt}
  \caption{Iteration-time composition for BF16 and KV4-RL on Qwen3-8B-Base. Policy and reference log-probability computation are separate learner forward passes, while Other is the residual between the logged total and the displayed components. Curves show 20-iteration means, and the annotated percentages report mean component shares over the final 50 iterations.}
  \label{fig:iteration-breakdown}
  \vspace{-10pt}
\end{figure}

Fig.~\ref{fig:iteration-breakdown} shows how rollout acceleration redistributes the remaining iteration cost. Over the final 50 updates, KV4-RL reduces the rollout share from $54.1\%$ under BF16 to $37.4\%$. The policy-update share consequently rises from $29.0\%$ to $39.6\%$ and becomes the largest individual component. Including the policy and reference log-probability forward passes, learner-side computation accounts for $60.5\%$ of KV4-RL iteration time. These final-interval shares differ from the full-run BF16 rollout share reported in Section~\ref{sec:5.1} because response lengths and rollout costs evolve during training. The absolute learner cost remains similar across the two configurations, so it becomes the primary limit on further end-to-end gains from rollout-only acceleration.

\end{document}